\documentclass{article}
\usepackage[final]{neurips_2021}

\bibliographystyle{plainnat}
\setcitestyle{numbers,square,comma}

\usepackage[utf8]{inputenc}
\usepackage[T1]{fontenc}
\usepackage{hyperref}      
\usepackage{url}           
\usepackage{booktabs}     
\usepackage{amsfonts}
\usepackage{nicefrac}
\usepackage{microtype}
\usepackage{xcolor}
\usepackage{wrapfig}

\usepackage{amssymb}
\usepackage{amsthm}
\usepackage{amsmath}
\usepackage{listings}
\usepackage{enumitem}
\usepackage{hyperref}
\usepackage{graphicx}
\usepackage{subfigure}
\usepackage{algorithmic}
\usepackage{algorithm}
\newtheorem{thm}{Theorem}

\title{Dynamic Bottleneck for Robust Self-Supervised Exploration}

\author{%
  Chenjia Bai \\
  Harbin Institute of Technology\\
  \texttt{bai\_chenjia@stu.hit.edu.cn} \\
  \And
  Lingxiao Wang \\
  Northwestern University\\
  \texttt{lingxiaowang2022@u.northwestern.edu} \\
  \And
  Lei Han \\
  Tencent Robotics X\\
  \texttt{lxhan@tencent.com} \\
  \And
  Animesh Garg \\
  University of Toronto, Vector Institute, NVIDIA\\
  \texttt{garg@cs.toronto.edu} \\
  \And
  Jianye Hao \\
  Tianjin University\\
  \texttt{jianye.hao@tju.edu.cn} \\
  \And
  Peng Liu \\
  Harbin Institute of Technology\\
  \texttt{pengliu@hit.edu.cn} \\
  \And
  Zhaoran Wang \\
  Northwestern University\\
  \texttt{zhaoranwang@gmail.com} \\
}

\begin{document}

\maketitle

\begin{abstract}
Exploration methods based on pseudo-count of transitions or curiosity of dynamics have achieved promising results in solving reinforcement learning with sparse rewards. However, such methods are usually sensitive to environmental dynamics-irrelevant information, e.g., white-noise. To handle such dynamics-irrelevant information, we propose a Dynamic Bottleneck (DB) model, which attains a dynamics-relevant representation based on the information-bottleneck principle. Based on the DB model, we further propose DB-bonus, which encourages the agent to explore state-action pairs with high information gain. We establish theoretical connections between the proposed DB-bonus, the upper confidence bound (UCB) for linear case, and the visiting count for tabular case. We evaluate the proposed method on Atari suits with dynamics-irrelevant noises. Our experiments show that exploration with DB bonus outperforms several state-of-the-art exploration methods in noisy environments.
\end{abstract}

\section{Introduction}

The tradeoff between exploration and exploitation has long been a major challenge in reinforcement learning (RL)~\citep{DQN-2015,sutton-2018,yang2021exploration}. Generally, excessive exploitation of the experience suffers from the potential risk of being suboptimal, whereas excessive exploration of novel states hinders the improvement of the policy. A straightforward way to tackle the exploration-exploitation dilemma is to enhance exploration efficiency while keeping exploitation in pace. When the extrinsic rewards are dense, reward shaping is commonly adopted for efficient exploration. However, in many real-world applications such as autonomous driving \citep{michelmore2018evaluating}, the extrinsic rewards are sparse, making efficient exploration a challenging task in developing practical RL algorithms. The situations become even worse when the extrinsic rewards are entirely unavailable. In such a scenario, the task of collecting informative trajectories from exploration is known as the self-supervised exploration \citep{largescale-2019}.

An effective approach to self-supervised exploration is to design a dense intrinsic reward that motivates the agent to explore novel transitions. Previous attempts include count-based \citep{count-2016} and curiosity-driven \citep{curiosity-2017} explorations. The count-based exploration builds a density model to measure the pseudo-count of state visitation and assign high intrinsic rewards to less frequently visited states. In contrast, the curiosity-driven methods maintain a predictive model of the transitions and encourage the agent to visit transitions with high prediction errors.
However, all these methods becomes unstable when the states are noisy, e.g., containing dynamics-irrelevant information. For example, in autonomous driving tasks, the states captured by the camera may contain irrelevant objects, such as clouds that behave similar to Brownian movement. Hence, if we measure the novelty of states or the curiosity of transitions through raw observed pixels, exploration are likely to be affected by the dynamics of these irrelevant objects.

To encourage the agent to explore the most informative transitions of dynamics, we propose a Dynamic Bottleneck (DB) model, which generates a dynamics-relevant representation $Z_t$ of the current state-action pair $(S_t, A_t)$ through the Information-Bottleneck (IB) principle \citep{inbo-2000}. The goal of training DB model is to acquire dynamics-relevant information and discard dynamics-irrelevant features simultaneously. To this end, we maximize the mutual-information $I(Z_t;S_{t+1})$ between a latent representation $Z_t$ and the next state $S_{t+1}$ through maximizing its lower bound and using contrastive learning. Meanwhile, we minimize the mutual-information
$I([S_t, A_t];Z_t)$ between the state-action pair and the corresponding representation to compress dynamics-irrelevant information. Based on our proposed DB model, we further construct a DB-bonus for exploration. DB-bonus measures the novelty of state-action pairs by their information gain with respect to the representation computed from the DB model. We show that the DB-bonus are closely related to the provably efficient UCB-bonus in linear Markov Decision Processes (MDPs) \citep{bandit-2011} and the visiting count in tabular MDPs \citep{auer2002using,regret-2010}. We further estimate the DB-bonus by the learned dynamics-relevant representation from the DB model. We highlight that exploration based on DB-bonus directly utilize the information gain of the transitions, which filters out dynamics-irrelevant noise. We conduct experiments on the Atari suit with dynamics-irrelevant noise injected. Results demonstrate that our proposed self-supervised exploration with DB-bonus is robust to dynamics-irrelevant noise and outperforms several state-of-the-art exploration methods.

\section{Related Work}

Our work is closely related to previous exploration algorithms that construct intrinsic rewards to quantify the novelty of states and transitions. Several early approaches directly define the pseudo-count by certain statistics to measure the novelty of states~\citep{count-2006,count-2012}; more recent methods utilize density model \citep{count-2016,count-2017} or hash map~\citep{tang-2017,opiq-2020} for state statistics. Nevertheless, these approaches are easily affected by dynamics-irrelevant information such as white-noise. The contingency awareness method \citep{Contingency-2019} addresses such an issue by using an attentive model to locate the agent and computes the pseudo-count based on regions around the agent. However, such an approach could ignore features that are distant from the agent but relevant to the transition dynamics. Another line of research measures the novelty through learning a dynamics model and then use the prediction error to generate an intrinsic reward. These methods are known as the curiosity-driven exploration algorithms. Similar to the pseudo-count based methods, curiosity-driven methods become unstable in the presence of noises, because the prediction model is likely to yield high error for stochastic inputs or targets. Some recent attempts improve the curiosity-driven approach by learning the inverse dynamics \citep{curiosity-2017} and variational dynamics \citep{bai-2020} to define curiosity, or utilizes the prediction error of a random network to construct intrinsic rewards~\citep{RND-2019}. However, without explicitly removing dynamics-irrelevant information, these methods are still vulnerable to noises in practice~\citep{largescale-2019}. 

The entropy-based exploration uses state entropy as the intrinsic reward. VISR \citep{hansen2019fast}, APT \citep{liu2021behavior} and APS \citep{liu2021aps} use unsupervised skill discovery for fast task adaptation. In the unsupervised stage, they use $k$-nearest-neighbor entropy estimator to measure the entropy of state, and then use it as the intrinsic reward. RE3 \citep{RE3-2021} and ProtoRL \citep{Proto-2021} use random encoder and prototypes to learn the representation and use state-entropy as bonuses in exploration. Nevertheless, the state entropy will increase significantly if we inject noises in the state space. The entropy-based exploration will be misled by the noises. Previous approaches also quantify the epistemic uncertainty of dynamics through Bayesian network \citep{vime-2016}, bootstrapped $Q$-functions \citep{bootstrap-2016,bai2021principled}, ensemble dynamics \citep{disagree-2019}, and Stein variational inference \citep{implicit-2020} to tackle noisy environments. However, they typically require either complicated optimization methods or large networks. In contrast, DB learns a dynamics-relevant representation and encourages exploration by directly accessing the information gain of new transitions via DB-bonus.

Another closely related line of studies uses the mutual information to promote exploration in RL. Novelty Search (NS) \citep{tao2020novelty} proposes to learn a representation through IB. Curiosity Bottleneck (CB)~\citep{CB-2019} also performs exploration based on IB by measuring the task-relevant novelty. However, both NS and CB require extrinsic rewards to learn a value function and are not applicable for self-supervised exploration. Moreover, NS contains additional $k$-nearest-neighbor to generate intrinsic reward and representation loss to constrain the distance of consecutive states, which are costly for computation. In contrast, our DB model handles self-supervised exploration without accessing extrinsic rewards. EMI \citep{emi-2019} learns a representation by maximizing the mutual information in the forward dynamics and the inverse dynamics , which is different from the IB principle used in our method. In addition, we aim to perform robust exploration to overcome the white-noise problem, while EMI does not have an explicit mechanism to address the noise. 

Our work is also related to representation learning in RL. DrQ \citep{DrQ-2020}, RAD \citep{RAD-2020} and CURL \citep{curl-2020} learn the state representation by data augmentation and contrastive learning \citep{simclr-2020,moco-2020,cl-2018} to improve the data-efficiency of DRL. Deep InfoMax \citep{infomax-2020} and Self-Predictive Representation (SPR) \citep{momentum-2020} learn the contrastive and predictive representations of dynamics, respectively, and utilize such representations as auxiliary losses for policy optimization. However, none of these existing approaches extracts information that benefits exploration. In contrast, we show that the dynamics-relevant representation learned by DB can be utilized for efficient exploration.

\section{The Dynamic Bottleneck}

In this section, we introduce the objective function and architecture of the DB model. We consider an MDP that can be described by a tuple $(\mathcal{O}, \mathcal{A}, \mathbb{P}, r, \gamma)$, which consists of the observation space $\mathcal{O}$, the action space $\mathcal{A}$, the transition dynamics $\mathbb{P}$, the reward function $r$, and the discount factor $\gamma\in(0, 1)$. At each time step, an agent decides to perform an action $a_t\in \mathcal{A}$ after observing $o_t\in \mathcal{O}$, and then the observation transits to $o_{t+1}$ with a reward $r_t$ received. In this paper, we use upper letters, such as $O_t$, to denote random variables and the corresponding lower case letter, such as $o_t$, to represent their corresponding realizations.

We first briefly introduce the IB principle \citep{inbo-2000}. In supervised setting that aims to learn a representation $Z$ of a given input source $X$ with the target source $Y$, IB maximizes the mutual information between $Z$ and $Y$ (i.e. $\max I(Z; Y)$) and restricts the complexity of $Z$ by using the constrain as $I(Z;X)<I_c$. Combining the two terms, the objective of IB is equal to $\max I(Z;Y)-\alpha I(Z;X)$ with the introduction of a Lagrange multiplier. 

DB follows the IB principle \citep{inbo-2000} to learn dynamics-relevant representation. The input variable of the DB model is a tuple $(O_t, A_t)$ that contains the current observation and action, and the target is the next observation $O_{t+1}$. We denote by $S_t$ and $S_{t+1}$ the encoding of observations $O_t$ and $O_{t+1}$. The goal of the DB model is to obtain a compressed latent representation $Z_t$ of $(S_t,A_t)$, that preserves the information that is relevant to $S_{t+1}$ only. Specifically, we use $f^S_o$ and $f^S_m$ as the encoders of two consecutive observations $o_t$ and $o_{t+1}$, respectively. We parameterize the dynamics-relevant representation $z_t$ by a Gaussian distribution with parameter $\phi$, and it takes ($s_t,a_t$) as input. We summarize the DB model as follows,
\begin{equation}\label{eq:paraf}
s_t=f^S_o(o_t;\theta_o),\:\:\:s_{t+1}=f^S_m(o_{t+1};\theta_m),\:\:\:z_t\sim g^Z(s_t,a_t;\phi).
\end{equation}

Following the IB principle, the objective of the DB model seeks to maximize the mutual information $I(Z_t;S_{t+1})$ while minimizing the mutual information $I([S_t,A_t];Z_t)$. To this end, we propose the DB objective by following the IB Lagrangian~\citep{inbo-2000}, which takes the form of
\begin{equation}
\label{eq:IB_lag}
\min -I(Z_t;S_{t+1})+\alpha_1 I([S_t,A_t];Z_t).
\end{equation}
Here $\alpha_1$ is a Lagrange multiplier that quantifies the amount of information about the next state preserved in $Z_t$. Fig. \ref{fig:overview} illustrates the DB objective. We minimize $I([S_t, A_t]; Z_t)$ and consider it as a regularizer in the representation learning. Then the representation learning is done by maximizing the mutual information $I(Z_t, S_{t+1})$. Maximizing $I(Z_t, S_{t+1})$ ensures that we do not discard useful information from $(S_t, A_t)$. In DB, the mutual information is estimated by several variational bounds parameterized by neural networks to enable differentiable and tractable computations. In what follows, we propose a lower bound of \eqref{eq:IB_lag}, which we optimize to train the DB model.

\begin{figure}[t]
\centering
\includegraphics[width=2.0in]{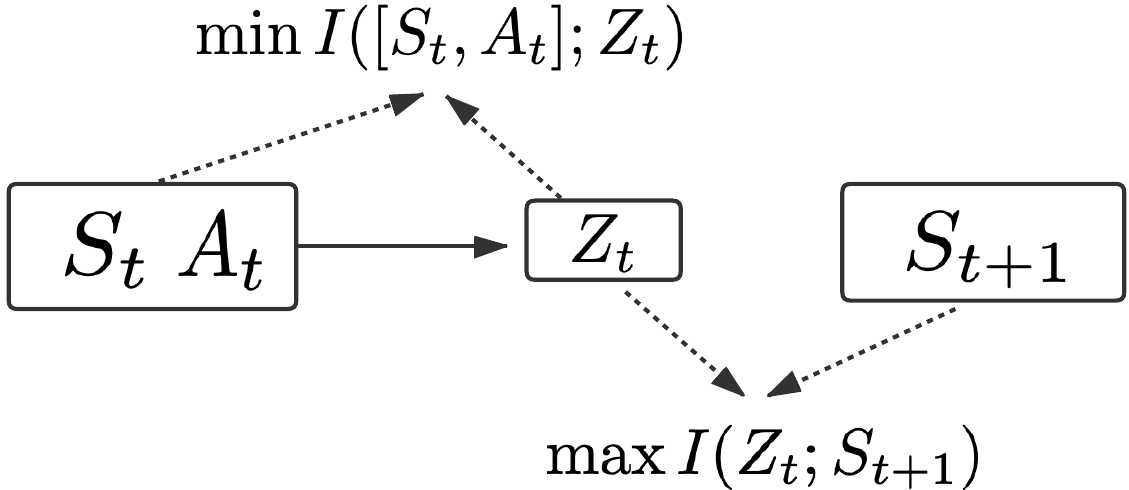}
\caption{Illustration of DB objective. DB minimizes the mutual information $I([S_t,A_t];Z_t)$ to obtain a compressive representation, and maximize $I(Z_t;S_{t+1})$ to preserve the  information.}
\label{fig:overview}
\end{figure}

\subsection{Maximizing the lower bound of $I(Z_t;S_{t+1})$}
\label{sec::lower_bound_0}

As directly maximizing $I(Z_t;S_{t+1})$ is intractable, we propose to optimize a predictive objective, which is a lower bound of $I(Z_t;S_{t+1})$ \citep{dvib-2017}. It holds that
\begin{equation}
\begin{aligned}
I(Z_t;S_{t+1})&=\mathbb{E}_{p(z_t,s_{t+1})}\bigg[\log \frac{p(s_{t+1}|z_t)}{p(s_{t+1})}\bigg]\\&=\mathbb{E}\bigg[\log\frac{q(s_{t+1}|z_t;\psi)}{p(s_{t+1})}\bigg]+D_{\rm KL}[p(s_{t+1}|z_t)\|q(s_{t+1}|z_t;\psi)],	
\end{aligned}
\end{equation}
where $p(s_{t+1}|z_t)$ is an intractable conditional distribution and $q(s_{t+1}|z_t; \psi)$ is a tractable variational decoder with parameter $\psi$. By the non-negativity of the KL-divergence, we obtain the following lower bound,
\begin{align}\label{eq::mutual_info_lower_bound}\nonumber
I(Z_t;S_{t+1})&\geq \mathbb{E}_{p(z_t,s_{t+1})}[\log q(s_{t+1}|z_t;\psi)]+\mathcal{H}(S_{t+1}),
\end{align}
where $\mathcal{H}(\cdot)$ is the entropy. Since $\mathcal{H}(S_{t+1})$ is irrelevant to the parameter $\psi$, maximizing $I(Z_t;S_{t+1})$ is equivalent to maximizing the following lower bound,
\begin{equation}\label{eq::predictive_lower_bound}
I_{\rm pred}\triangleq \mathbb{E}_{p(z_t,s_{t+1})}[\log q(s_{t+1}|z_t;\psi)].
\end{equation}
$I_{\rm pred}$ can be interpreted as the log-likelihood of the next-state encoding $s_{t+1}$ given the dynamics-relevant representation $z_t$. In practice, we parameterize the prediction head $q(s_{t+1}|z_t;\psi)$ by a neural network that outputs a diagonal Gaussian random variable. Since $s_t$, $s_{t+1}$ and $z_t$ are all low-dimensional vectors instead of raw image pixels, optimizing $I_{\rm pred}$ is computationally efficient.

\paragraph{Momentum Encoder} To encode the consecutive observations $o_t$ and $o_{t+1}$, we adopt the Siamese architecture \citep{siamese-1993} that uses the same neural network structures for the two encoders. Nevertheless, we observe that if we train both the encoders by directly maximizing $I_{\rm pred}$, the Siamese architecture tends to converge to a collapsed solution. That is, the generated encodings appears to be uninformative constants. A simple fact is that if both the encoders generate zero vectors, predicting zeros conditioning on $z_t$ (or any variables) is a trivial solution. To address such issue, we update the parameter $\theta_o$ of $f^S_o$ in \eqref{eq:paraf} by directly optimizing $I_{\rm pred}$. Meanwhile, we update the parameter $\theta_m$ of $f^S_m$ by a momentum moving average of $\theta_o$, which takes the form of $\theta_m\leftarrow \tau \theta_m+(1-\tau)\theta_o$. In the sequel, we call $f^S_o$ the online encoder and $f^S_m$ the \emph{momentum} encoder, respectively. Similar techniques is also adopted in 
previous study~\citep{BYOL-2020,momentum-2020} to avoid the mode collapse.

\subsection{Contrastive Objective for Maximizing $I(Z_t;S_{t+1})$}

In addition to the lower bound of $I(Z_t;S_{t+1})$ in \S\ref{sec::lower_bound_0}, we also investigate the approach of maximizing the mutual information by contrastive learning (CL)~\citep{cl-2018}. CL classifies positive samples and negative samples in the learned representation space. An advantage of adopting CL is that training with negative samples plays the role of regularizer, which avoids collapsed solutions. Moreover, the contrastive objective yields a variational lower bound of the mutual information $I(Z;S_{t+1})$. To see such a fact, note that by the Bayes rule, we have
\begin{equation}\label{eq:nce}
I(Z_t;S_{t+1})\geq\mathbb{E}_{p(z_t,s_{t+1})}\mathbb{E}_{S^-}\bigg[\log \frac{\exp (h(z_t,s_{t+1}))}{\sum_{s_j\in S^-\cup{s_{t+1}}}\exp (h(z_t,s_j))}\bigg] \triangleq I_{\rm nce}.
\end{equation}
Here $h$ is a score function which assigns high scores to positive pairs and low score to negative pairs. We refer to Appendix~\ref{app:nce} for a detailed proof of \eqref{eq:nce}. The right-hand side of \eqref{eq:nce} is known as the InfoNCE objective \citep{cl-2018}. The positive samples are obtained by directly sampling the transitions $(s, a, s')$. In contrast, the negative samples are obtained by first sampling a state-action pair $(s, a)$, and then sampling a state $\tilde s$ independently. Then a negative sample is obtained by concatenating them together to form a tuple $(s, a, \tilde s)$. The negative samples do not follow the transition dynamics. In practice, we collect the negative sample by sampling observation encodings randomly from the batch. We remark that comparing with methods that require data augmentation to construct negative samples \citep{moco-2020,curl-2020}, DB utilizes a simple scheme to obtain positive and negative samples from on-policy experiences.


In \eqref{eq:nce}, we adopt the standard bilinear function as the score function $h$, 
which is defined as follows,
\begin{equation}\label{eq:nceh}
h(z_t,s_{t+1})=f^P_o(\bar{q}(z_t;\psi))^{\top} \mathcal{W} f^P_m(s_{t+1}),
\end{equation}
where $f^P_o(\cdot;\varphi_o)$ and $f^P_m(\cdot;\varphi_m)$ project $s_{t+1}$ and the mean value of next-state prediction $q(s_{t+1}|z_t;\varphi)$, i.e., $\bar{q}(\cdot; \psi)$, to a latent space to apply the contrastive loss $I_{\rm nce}$ in \eqref{eq:nce}, and $\mathcal{W}$ is the parameter of the score function. Similar to the observation encoder and MoCo-based architectures \citep{curl-2020,moco-2020,moco2-2020}, we also adopt an online projector $f^P_o$ and a momentum projector $f^P_m$ for $z_t$ and $s_{t+1}$, respectively. The momentum projector is updated by $\varphi_m\leftarrow \tau \varphi_m+(1-\tau)\varphi_o$. 

\subsection{Minimizing the Upper Bound of $I([S_t,A_t];Z_t)$}

We minimize the mutual information $I([S_t,A_t];Z_t)$ through minimizing a tractable upper bound of the mutual information. To this end, we introduce a variational approximation $q(z_t)$ to the intractable marginal $p(z_t)=\int p(s_t,a_t)p(z_t|s_t,a_t)ds_t a_t$. Specifically, the following upper-bound of $I([S_t,A_t];Z_t)$ holds, 
\begin{equation}
\begin{split}
I([S_t,A_t];Z_t)=\mathbb{E}_{p(s_t,a_t)}\Big[\frac{p(z_t|s_t,a_t)}{p(z_t)}&\Big]=\mathbb{E}_{p(s_t,a_t)}\Big[\frac{p(z_t|s_t,a_t)}{q(z_t)}\Big]-D_{\rm KL}\big[p(z_t)\|q(z_t)\big]\\
&\leq \mathbb{E}_{p(s_t,a_t)}\big[D_{\rm KL}[p(z_t|s_t,a_t)\|q(z_t)]\big]\triangleq I_{\rm upper},
\label{eq:upperbound}
\end{split}
\end{equation}
where the inequality follows from the non-negativity of the KL divergence, and $q(z_t)$ is an approximation of the marginal distribution of $Z_t$. We follow \citet{dvib-2017} and use a standard spherical Gaussian distribution $q(z_t)=\mathcal{N}(0,\mathbf{I})$ as the approximation. The expectation of $I_{\rm upper}$ is estimated by sampling from on-policy experiences.

\begin{figure*}[t]
\centering
\includegraphics[width=5.4in]{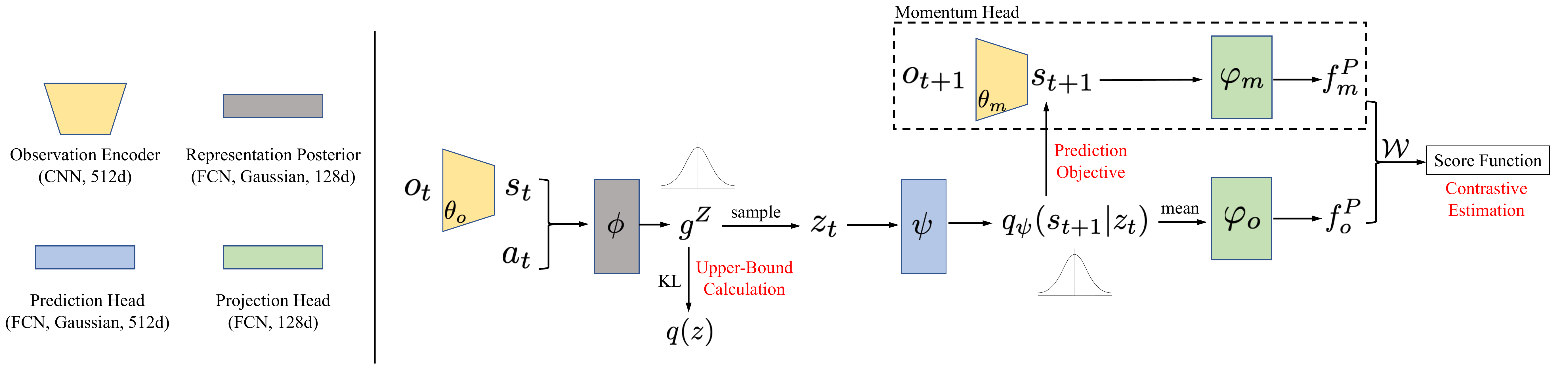}
\caption{The network architecture of DB. The architecture contains several convolution neural networks (CNNs) and fully-connected networks (FCNs). It consists of four components, including (i) the observation encoder $(f^S_o, f^S_m)$, which consists of an online network $f^S_o$ and a momentum network $f^S_m$; (ii) the representation posterior $g^Z(s_t,a_t;\phi)$, which generates a Gaussian distribution for the input state action pair $(s_t, a_t)$; (iii) the prediction head $q_\psi(s_{t+1}|z_t)$, which predicts the next state $s_{t+1}$ based on sampling representation $z_t$ from the representation posterior; and (iv) the projection heads $(f^P_o, f^P_m)$, which maps the next observation encoding $s_{t+1}$ and its prediction from the prediction head to low-dimensional space to perform contrastive estimation. } 
\label{fig:db-network}
\end{figure*}

\subsection{The Loss Function and Architecture}
\label{sec::architecture}

The final loss for training the DB model is a combination of the upper and lower bounds established in previous sections,
\begin{equation}\label{eq:db-loss}
\min_{\theta_o,\phi,\psi,\varphi_o,\mathcal{W}} \mathcal{L_{\rm DB}}= \alpha_1 I_{\rm upper}-\alpha_2 I_{\rm pred}-\alpha_3 I_{\rm nce},
\end{equation}
where $\alpha_1$, $\alpha_2$ and $\alpha_3$ are hyper-parameters. 
As we show in the ablation study (\S\ref{sec::experiment}), all of the three components in the loss plays an important role in learning dynamics-relevant representations. We illustrate the architecture of the DB model in Fig.~\ref{fig:db-network}.
In practice, we minimize $\mathcal{L}_{\rm DB}$ in \eqref{eq:db-loss} by gradient descent, which iteratively updates the parameters of  $f^S_o,g^Z,q_\psi,\mathcal{W}$ and $f^P_o$. Meanwhile, we adopt exponential moving average to update the parameters of $f^S_m$ and $f^P_m$ to avoid collapsed solutions. We refer to Appendix~\ref{app:db-train} for the pseudocode of training DB model.

\section{Exploration with DB-Bonus}

We are now ready to introduce the DB-bonus $r^{\rm db}$ for exploration. In this section, we first present the DB-bonus for self-supervised exploration. We establish the theoretical connections between the DB-bonus and provably efficient bonus functions. We further present the empirical estimation of DB-bonus and the policy optimization algorithm that utilizes the DB-bonus. 

In the sequel, we assume that the learned parameter $\Theta$ of the DB model follows a Bayesian posterior distribution given the training dataset $\mathcal{D}_m = \{(s^i_t, a^i_t, s^i_{t+1})\}_{i \in [0,m]}$, which is a collection of past experiences from $m$ episodes performed by the agent to train the DB model. We aim to estimate the following conceptual reward, which is defined by the mutual information between the parameter of the DB model and the transition dynamics given the training dataset,
\begin{equation}\label{eq:dbreward}
\begin{split}
r^{\rm db}(s_t, a_t) &\triangleq I\bigl(\Theta; (s_t, a_t, S_{t+1}) | \mathcal{D}_m\bigr)^{\nicefrac{1}{2}} \\
&= \Bigl[\mathcal{H}\bigl((s_t, a_t, S_{t+1})|\mathcal{D}_m\bigr)-\mathcal{H}\bigl((s_t, a_t, S_{t+1})|\Theta, \mathcal{D}_m\bigr)\Bigr]^{\nicefrac{1}{2}}.
\end{split}
\end{equation}
Intuitively, DB-bonus defined in \eqref{eq:dbreward} encourages the agent to explore transitions that are maximally informative to the improvement of the DB model.

\subsection{Theoretical Analysis}

We show that the DB-bonus defined in \eqref{eq:dbreward} enjoys well theoretical properties, and establish theoretical connections between $r^{\rm db}$ and
bonuses based on the 
optimism in the face of uncertainty \citep{auer-2007,jin2018q}, which incorporates UCB into value functions in both tabular \citep{minmax-2017,regret-2010,dann-2015} and linear MDPs \citep{jin-2019,oppo-2020}.

\paragraph{Connection to UCB-bonus in linear MDPs}

In linear MDPs, the transition kernel and reward function are assumed to be linear. In such a setting, LSVI-UCB~\citep{jin-2019} provably attains a near-optimal worst-case regret, and
we refer to Appendix~\ref{app:optimistic-lsvi} for the details. The idea of LSVI-UCB is using an optimistic $Q$-value, which is obtained by adding an UCB-bonus $r^{\rm ucb}$ \citep{bandit-2011} to the estimation of the $Q$-value. The UCB-bonus is defined as
$r^{\rm ucb}_t=\beta\cdot\big[\eta(s_t,a_t)^\top\Lambda_t^{-1}\eta(s_t,a_t)\big]^{\nicefrac{1}{2}}$,
where $\beta$ is a constant, $\Lambda_t=\sum_{i=0}^{m}\eta(x_t^{i},a_t^{i})\eta(x_t^{i},a_t^{i})^\top+\lambda \cdot \mathrm{\mathbf{I}}$ is the Gram matrix, and $m$ is the index of the current episode. The UCB-bonus measures the epistemic uncertainty of the state-action and is provably efficient \citep{jin-2019}. 

For linear MDPs, we consider representation $z\in \mathbb{R}^{c}$ as the mean of the posterior $g^Z$ from the DB model, and set $z_t$ to be a linear function of the state-action encoding, i.e., $z_t = W_t\eta(s_t,a_t)$ parameterized by $W_t\in \mathbb{R}^{c\times d}$. Then, the following theorem establishes a connection between the DB-bonus $r^{\rm db}$ and the UCB-bonus $r^{\rm ucb}$.
\begin{thm}
\label{thm:ucb-informal}
In linear MDPs, for tuning parameter $\beta_0>0$, it holds that 
\begin{equation}
\nicefrac{\beta_0}{\sqrt{2}}\cdot r^{\rm ucb}_t \leq I(W_t;(s_t,a_t,S_{t+1})|\mathcal{D}_m)^{\nicefrac{1}{2}} \leq \beta_0\cdot r^{\rm ucb}_t,
\end{equation}
where $I(W_t;(s_t,a_t,S_{t+1})|\mathcal{D}_m)^{\nicefrac{1}{2}}$ is the DB-bonus $r^{\rm db}(s_t,a_t)$ under the linear MDP setting.
\end{thm}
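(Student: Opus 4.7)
The plan is to exploit the linear-Gaussian structure introduced just before the theorem ($z_t=W_t\eta(s_t,a_t)$ together with the Gaussian form of the DB decoder) to reduce the mutual information to a closed-form expression in the Gram matrix $\Lambda_t$, and then to compare it against $\eta(s_t,a_t)^\top\Lambda_t^{-1}\eta(s_t,a_t)$ via the elementary inequality $\log(1+u)\asymp u$ on a bounded interval. First I would observe that $(s_t,a_t)$ is independent of $W_t$ under the posterior $p(W_t\mid\mathcal{D}_m)$, so the chain rule for mutual information gives
\[
I\bigl(W_t;(s_t,a_t,S_{t+1})\mid\mathcal{D}_m\bigr)=I\bigl(W_t;S_{t+1}\mid s_t,a_t,\mathcal{D}_m\bigr),
\]
which is the object I actually need to evaluate.

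Next I would make the Bayesian model explicit: place a matrix-normal prior on $W_t$ and model $S_{t+1}=W_t\eta(s_t,a_t)+\varepsilon$ with $\varepsilon\sim\mathcal{N}(0,\sigma_0^2\mathbf{I}_c)$, matching the diagonal-Gaussian form of the decoder $q(s_{t+1}\mid z_t;\psi)$. A standard conjugate linear-regression calculation then shows that $p(W_t\mid\mathcal{D}_m)$ is matrix-normal with row-covariance $\sigma^2\Lambda_t^{-1}$, where $\Lambda_t$ is precisely the Gram matrix that appears in $r^{\rm ucb}_t$. Plugging this into
\[
I\bigl(W_t;S_{t+1}\mid s_t,a_t,\mathcal{D}_m\bigr)=\mathcal{H}(S_{t+1}\mid s_t,a_t,\mathcal{D}_m)-\mathcal{H}(S_{t+1}\mid W_t,s_t,a_t)
\]
and evaluating the two Gaussian entropies in closed form yields
\[
I\bigl(W_t;S_{t+1}\mid s_t,a_t,\mathcal{D}_m\bigr)=\tfrac{c}{2}\log\!\Bigl(1+C\cdot\eta(s_t,a_t)^\top\Lambda_t^{-1}\eta(s_t,a_t)\Bigr),
\]
with $C=\sigma^2/\sigma_0^2$ determined by the prior and the observation variance.

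The final step is the two-sided comparison. On any interval $u\in[0,u_0]$ with $u_0$ not too large, the elementary sandwich $u/2\le\log(1+u)\le u$ holds (the lower bound is valid up to $u_0\approx 2.51$). Taking square roots then produces
\[
\sqrt{\tfrac{cC}{4}}\,\sqrt{\eta^\top\Lambda_t^{-1}\eta}\ \le\ I(W_t;\cdot\mid\mathcal{D}_m)^{\nicefrac{1}{2}}\ \le\ \sqrt{\tfrac{cC}{2}}\,\sqrt{\eta^\top\Lambda_t^{-1}\eta},
\]
and since $\sqrt{cC/4}=\sqrt{cC/2}/\sqrt{2}$, the $\sqrt{2}$ gap in the theorem is exactly the gap between the two sides of $u/2\le\log(1+u)\le u$ after square-rooting. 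Setting $\beta_0=\sqrt{cC/2}/\beta$ then absorbs every remaining constant and recovers the displayed bound.

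The main obstacle I anticipate is controlling $u$ uniformly over $(s_t,a_t)$, since the lower half of the sandwich degrades if $u=C\,\eta(s_t,a_t)^\top\Lambda_t^{-1}\eta(s_t,a_t)$ can become arbitrarily large. This requires the standard linear-MDP regularity $\|\eta(s,a)\|\le 1$ together with $\Lambda_t\succeq\lambda\mathbf{I}$ from the ridge term, which yields a uniform $u_0\le C/\lambda$. Beyond that, the bookkeeping needed to justify why the implicit conjugate Gaussian prior/likelihood on $W_t$ is the right Bayesian model in the linear-MDP setting — and to match the resulting constants cleanly to the single tuning parameter $\beta_0$ in the statement — is where most of the care will be needed.
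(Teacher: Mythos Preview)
Your proposal is correct and follows the same high-level strategy as the paper: instantiate a Bayesian linear regression for $S_{t+1}=W_t\eta(s_t,a_t)+\varepsilon$ with a Gaussian prior on $W_t$, reduce the mutual information to the closed form $\tfrac{c}{2}\log(1+\eta^\top\Lambda_t^{-1}\eta)$, and then apply the sandwich $u/2\le\log(1+u)\le u$ (the paper uses exactly $\|\eta\|\le 1$ and $\lambda=1$ to ensure $u\le 1$, which is the boundedness you anticipated needing). The one genuine difference is \emph{which} entropy you expand: you compute $I$ on the observation side via $\mathcal{H}(S_{t+1}\mid s_t,a_t,\mathcal{D}_m)-\mathcal{H}(S_{t+1}\mid W_t,s_t,a_t)$, whereas the paper computes it on the parameter side via $\mathcal{H}(W_t\mid\mathcal{D}_m)-\mathcal{H}(W_t\mid\mathcal{D}_m\cup\{(s_t,a_t,S_{t+1})\})$, vectorizing $W_t$, building the block-diagonal $\tilde\Lambda_t$, and invoking the Matrix Determinant Lemma to collapse $\tfrac{1}{2}\log\det(\tilde\eta^\top\tilde\Lambda_t^{-1}\tilde\eta+\mathbf{I})$ to the scalar form. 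Your route is shorter and avoids the vectorization machinery; the paper's route makes the posterior covariance structure of $W_t$ fully explicit. Both land on the identical scalar expression, so the final comparison step is the same. (A small caveat: in the paper's specific scaling, with unit noise and prior variance $1/\lambda$, the posterior row-covariance is exactly $\Lambda_t^{-1}$ and your constant $C$ collapses to $1$; your extra $\sigma^2/\sigma_0^2$ is harmless since it is absorbed into $\beta_0$, as you note.)
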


In addition, using $r^{\rm db}$ as bonus leads to the same regret as LSVI-UCB by following a similar proof to \citet{jin-2019}. We refer to Appendix \ref{app:proof-ucb} for the problem setup and the detailed proofs. We remark that Theorem 1 is an approximate derivation because we only consider the predictive objective $I_{\rm pred}$ in \eqref{eq:db-loss} in Theorem 1. Nevertheless, introducing the contrastive objective $I_{\rm nce}$ is important in the training of the DB model as it prevents the mode collapse issue. Theorem \ref{thm:ucb-informal} shows that the DB-bonus provides an instantiation of the UCB-bonus in DRL, which enables us to measure the epistemic uncertainty of high-dimensional states and actions without the linear MDP assumption. 

\paragraph{Connection to visiting count in tabular MDP}
The following theorem establishes connections between DB-bonus and the count-based bonus $r^{\rm count}(s_t,a_t)=\frac{\beta}{\sqrt{N_{s_t,a_t}+\lambda}}$ in tabular MDPs.
\begin{thm}
\label{thm:count-informal}
In tabular MDPs, it holds for the DB-bonus $r^{\rm db}(s_t,a_t)$ and the count-based intrinsic reward $r^{\rm count}(s_t,a_t)$ that,
\begin{equation}
r^{\rm db}(s_t,a_t)\approx \:\frac{\sqrt{|\mathcal{S}|/2}}{\sqrt{N_{s_t,a_t}+\lambda}}\:\:=\:\:\beta_0\cdot r^{\rm count}(s_t,a_t),
\end{equation}
when $N_{s_t,a_t}$ is large, where $\lambda > 0$ is a tuning parameter, $|\mathcal{S}|$ is the number of states in tabular setting.
\end{thm}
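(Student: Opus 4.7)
The plan is to reduce the DB-bonus to a concrete Bayesian quantity in the tabular setting and then expand it asymptotically. In a tabular MDP the unknown parameter is $\Theta=\{\theta_{s,a}\}_{s,a}$, where each $\theta_{s,a}=(p(s'\mid s,a))_{s'\in\mathcal{S}}$ lies on the $(|\mathcal{S}|-1)$-simplex. I would place an independent symmetric Dirichlet prior $\mathrm{Dir}(\alpha,\ldots,\alpha)$ on each $\theta_{s,a}$ (this is the natural conjugate prior and plays the role of the DB parameter prior). After $\mathcal{D}_m$ is observed, conjugacy gives the posterior $\theta_{s_t,a_t}\mid\mathcal{D}_m\sim\mathrm{Dir}(\alpha+N_{s_t,a_t,s'})_{s'}$, whose total mass is $B\triangleq|\mathcal{S}|\alpha+N_{s_t,a_t}$. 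Because only $\theta_{s_t,a_t}$ interacts with $S_{t+1}$ given $(s_t,a_t)$, I get
\begin{equation*}
I\bigl(\Theta;(s_t,a_t,S_{t+1})\mid\mathcal{D}_m\bigr)=I\bigl(\theta_{s_t,a_t};S_{t+1}\mid s_t,a_t,\mathcal{D}_m\bigr),
\end{equation*}
so the problem becomes evaluating the mutual information between a Dirichlet-distributed simplex variable and one categorical sample drawn from it.

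Next I would write the mutual information in the ``entropy-minus-conditional-entropy'' form,
\begin{equation*}
I=\mathcal{H}(S_{t+1}\mid\mathcal{D}_m)-\mathbb{E}_{\theta}\!\bigl[\mathcal{H}(S_{t+1}\mid\theta)\bigr]
=-\!\sum_{s'}\bar p_{s'}\log\bar p_{s'}+\sum_{s'}\mathbb{E}_{\theta}\!\bigl[\theta_{s'}\log\theta_{s'}\bigr],
\end{equation*}
where $\bar p_{s'}=(\alpha+N_{s_t,a_t,s'})/B$ is the posterior mean. Since $N_{s_t,a_t}$ is large, the Dirichlet posterior concentrates around $\bar p$ with $\mathrm{Var}(\theta_{s'})=\bar p_{s'}(1-\bar p_{s'})/(B+1)$. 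I would then Taylor-expand $f(\theta)=\theta\log\theta$ around each $\bar p_{s'}$, keeping terms up to second order; the first-order term vanishes upon taking expectation, and the second derivative $f''(\theta)=1/\theta$ gives
\begin{equation*}
\mathbb{E}[\theta_{s'}\log\theta_{s'}]\approx\bar p_{s'}\log\bar p_{s'}+\tfrac{1}{2}\cdot\frac{\mathrm{Var}(\theta_{s'})}{\bar p_{s'}}=\bar p_{s'}\log\bar p_{s'}+\frac{1-\bar p_{s'}}{2(B+1)}.
\end{equation*}
Summing over $s'$, the leading entropy terms cancel and $\sum_{s'}(1-\bar p_{s'})=|\mathcal{S}|-1$, which yields the clean asymptotic
\begin{equation*}
I\bigl(\theta_{s_t,a_t};S_{t+1}\mid\mathcal{D}_m\bigr)\approx\frac{|\mathcal{S}|-1}{2(N_{s_t,a_t}+\lambda)},
\end{equation*}
where $\lambda=|\mathcal{S}|\alpha+1$ absorbs the Dirichlet hyperparameter. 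Taking the square root per the definition of $r^{\rm db}$ and using $|\mathcal{S}|-1\approx|\mathcal{S}|$ produces exactly $\sqrt{|\mathcal{S}|/2}/\sqrt{N_{s_t,a_t}+\lambda}=\beta_0\cdot r^{\rm count}(s_t,a_t)$.

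The main obstacle is justifying the Taylor expansion rigorously: the bound on the remainder requires controlling $\mathbb{E}[|\theta_{s'}-\bar p_{s'}|^3/\theta_{s'}^{\,2}]$, which is delicate when some $\bar p_{s'}$ are near the boundary of the simplex. For $N_{s_t,a_t}$ large with $\alpha>0$ this is controlled by standard Dirichlet moment bounds, and the third-order correction is $O(1/N_{s_t,a_t}^2)$, so it is dominated by the leading $O(1/N_{s_t,a_t})$ term and the asymptotic equivalence follows. A secondary subtlety is that the ``$\approx$'' in the theorem statement is doing triple duty: the Taylor truncation error, the $|\mathcal{S}|-1\to|\mathcal{S}|$ replacement, and the absorption of the prior-dependent offset into the tuning parameter $\lambda$; I would state all three explicitly so the reader can see the regime of validity.
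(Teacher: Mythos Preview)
Your argument is correct, but it follows a genuinely different route from the paper's. The paper does not introduce a Dirichlet--categorical model at all; instead, it treats the tabular MDP as a degenerate instance of the linear MDP setting from Theorem~\ref{thm:ucb-informal}. Concretely, it takes the feature map $\eta(s,a)$ to be the one-hot vector in $\mathbb{R}^{|\mathcal{S}|\times|\mathcal{A}|}$, observes that the Gram matrix $\Lambda_t=\sum_i\eta\eta^\top+\lambda\mathbf{I}$ is diagonal with entries $N_{s,a}+\lambda$, and then invokes the Gaussian Bayesian-regression computation already carried out for Theorem~\ref{thm:ucb-informal}, which yields $I\approx\tfrac{c}{2}\log\bigl(1+\eta^\top\Lambda_t^{-1}\eta\bigr)\approx\tfrac{c}{2}\cdot\tfrac{1}{N_{s_t,a_t}+\lambda}$ with $c=|\mathcal{S}|$. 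The square root then gives the statement directly.

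Your approach is arguably more natural for the tabular setting: a Dirichlet prior is the canonical conjugate model for categorical transitions, whereas the paper's Gaussian-noise linear-regression model is an artifact of the linear-MDP framework and somewhat artificial in the finite case. You also obtain the constant $|\mathcal{S}|-1$ (the actual degrees of freedom on the simplex) rather than $|\mathcal{S}|$, and your $\lambda$ has a clear interpretation as the prior pseudo-count. On the other hand, the paper's route is shorter because it reuses the machinery of Theorem~\ref{thm:ucb-informal} wholesale; your Taylor-expansion step and the accompanying remainder control are additional work that the paper avoids by never leaving the Gaussian setting. Both arguments are asymptotic in $N_{s_t,a_t}$ and land on the same $\sqrt{|\mathcal{S}|/2}/\sqrt{N_{s_t,a_t}+\lambda}$ form, so either is acceptable here.
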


We refer to Appendix~\ref{app:proof-count} for a detailed proofs. As a result, DB-bonus can also be considered as a count-based intrinsic reward in the space of dynamics-relevant representations.
\subsection{Empirical Estimation}

To estimate such a bonus under our DB model, we face several challenges. (i) Firstly, estimating the bonus defined in \eqref{eq:dbreward} requires us to parameterize representation under a Bayesian learning framework, whereas our DB model is parameterized by non-Bayesian neural networks. (ii) Secondly, estimating the DB-bonus defined in \eqref{eq:dbreward} requires us to compute the mutual information between the unknown transitions and the estimated model, which is in general hard as we do not have access to such transitions in general. To address such challenges, we estimate a lower bound of the DB-bonus, which is easily implementable and achieves reasonable performance empirically. Specifically, we consider to use $r^{\rm db}_l(s_t,a_t)$ as the lower bound of the information gain in \eqref{eq:dbreward},
\begin{equation}\label{eq:qqq}
\begin{split}
r^{\rm db}(s_t, a_t) \geq \Bigl[\mathcal{H}\bigl(g(s_t, a_t, S_{t+1})|\mathcal{D}_m\bigr)-\mathcal{H}\bigl(g(s_t, a_t, S_{t+1})|\Theta, \mathcal{D}_m\bigr)\Bigr]^{\nicefrac{1}{2}}\triangleq r^{\rm db}_l(s_t,a_t),
\end{split}
\end{equation}
which holds for any mapping $g$ according to Data Processing Inequality (DPI). DPI is an information theoretic concept that can be understood as `post-processing' cannot increase information. Since $g(s_t,a_t,S_{t+1})$ is a post-processing of $(s_t,a_t,S_{t+1})$, we have $I(\Theta; (s_t,a_t,S_{t+1}))>I(\Theta; g(s_t,a_t,S_{t+1}))$, where $g$ is a neural network in practice. In our model, we adopt the following mapping,
\begin{equation}
g(s_t, a_t, S_{t+1}) | \Theta, \mathcal{D}_m = g^Z(s_t, a_t; \phi),
\end{equation}
where $g^Z$ is the representation distribution of DB, and $\phi$ constitutes a part of parameters of the total parameters $\Theta$. Intuitively, since $g^Z$ is trained by IB principle to capture information of transitions, adopting the mapping $g^Z$ to \eqref{eq:qqq}
yields a reasonable approximation of the DB-bonus. It further holds 
\begin{equation}\label{eq::DB_approx_11}
r^{\rm db}_l(s_t, a_t) = \Bigl[\mathcal{H}\bigl(g^{\rm margin}\bigr)-\mathcal{H}\bigl(g^Z(s_t, a_t; \phi)\bigr)\Bigr]^{\nicefrac{1}{2}}
= \mathbb{E}_{\Theta}D_{\rm KL}\big[g^Z(z_t|s_t,a_t; \phi)\|g^{\rm margin}\big]^{\nicefrac{1}{2}},
\end{equation}
where we define $g^{\rm margin}=g(s_t, a_t, S_{t+1}) | \mathcal{D}_m$ as the marginal of the encodings over the posterior of the parameters $\Theta$ of the DB model. In practice, since $g^{\rm margin}$ is intractable, we approximate $g^{\rm margin}$ with standard Gaussian distribution. We remark that such approximation is motivated by the training of DB model, which drives the marginal of representation $g^Z$ toward $\mathcal{N}(0,\mathbf{I})$ through minimizing $I_{\rm upper}$ in \eqref{eq:upperbound}. Such approximation leads to a tractable estimation and stable empirical performances.

In addition, since we do not train the DB model with Bayesian approach, we replace the expectation over posterior $\Theta$ in \eqref{eq::DB_approx_11} by the corresponding point estimation, namely the parameter $\Theta$ of the neural networks trained with DB model on the dataset $\mathcal{D}_m$. To summarize, we utilize the following approximation of the DB-bonus $r^{\rm db}$ proposed in \eqref{eq:dbreward},
\begin{equation}\label{eq:reward-practice}
\hat{r}^{\rm db}_l(s_t,a_t)=D_{\rm KL} \big[g^Z(\cdot|s_t, a_t; \phi)\:\|\:\mathcal{N}(0,\mathbf{I})\big]^{\nicefrac{1}{2}}\approx r^{\rm db}_l(s_t,a_t).
\end{equation}
Since DB is trained by IB principle, which filters out the dynamics-irrelevant information, utilizing the bonus defined in \eqref{eq:reward-practice} allows the agent to conduct robust exploration in noisy environments.

\begin{algorithm}[t]
\caption{SSE-DB}
\label{alg:DB-PPO}
\begin{algorithmic}[1]
\STATE {\bf Initialize:} The DB model and the actor-critic network
\FOR {episode $i = 1$ to $M$}
\FOR {timestep $i = 0$ to $T-1$}
	\STATE Obtain action from the actor $a_t=\pi(s_t)$, then execute $a_t$ and observe the state $s_{t+1}$;
	\STATE Add $(s_t, a_t, s_{t+1})$ into the on-policy experiences;
	\STATE Obtain the DB-bonus $\hat{r}^{\rm db}_l$ of $(s_t, a_t)$ by \eqref{eq:reward-practice};
\ENDFOR
\STATE Update the actor and critic by PPO with the collected on-policy experiences as the input;
\STATE Update DB by gradient descent based on \eqref{eq:db-loss} with the collected on-policy experiences;
\ENDFOR
\end{algorithmic}
\end{algorithm}

We summarize the the overall RL algorithm with self-supervised exploration induced by the DB-bonus in Algorithm~\ref{alg:DB-PPO}, which we refer to as Self-Supervised Exploration with DB-bonus (SSE-DB). For the RL implementation, we adopt Proximal Policy Optimization (PPO) \citep{ppo-2017} with generalized advantage estimation \citep{gae-2016} and the normalization schemes from \citet{largescale-2019}. We refer to Appendix~\ref{app:db-detail} for the implementation details. The codes are available at \url{https://github.com/Baichenjia/DB}.

\section{Experiments}
\label{sec::experiment}
We evaluate SSE-DB on Atari games. We conduct experiments to compare the following methods. (\romannumeral 1)~\textbf{SSE-DB}. The proposed method in Alg.~\ref{alg:DB-PPO}.
(\romannumeral 2) \textbf{Intrinsic Curiosity Model (ICM)} \citep{curiosity-2017}.  ICM uses an inverse dynamics model to extract features related to the actions. ICM further adopts the prediction error of dynamics as the intrinsic reward for exploration.
(\romannumeral 3) \textbf{Disagreement} \citep{disagree-2019}. This method captures epistemic uncertainty by the disagreement among predictions from an ensemble of dynamics models. Disagreement performs competitive to ICM and RND \citep{RND-2019}. Also, this method is robust to white-noise. 
(\romannumeral 4) \textbf{Curiosity Bottleneck (CB)} \citep{CB-2019}. CB quantifies the compressiveness of observation with respect to the representation as the bonus. CB is originally proposed for exploration with extrinsic rewards. We adapt CB for self-supervised exploration by setting the extrinsic reward~zero. 
We compare the model complexity of all the methods in Appendix~\ref{app:db-detail}. Other methods including Novelty Search \citep{tao2020novelty} and Contingency-aware exploration \citep{Contingency-2019} are also deserve to compare. However, we find Novelty Search ineffective in our implementation since the detailed hyper-parameters and empirical results in Atari are not available. Contingency-aware exploration is related to DB while the attention module is relatively complicated and the code is not achievable.

\begin{figure*}[t]
\centering
\includegraphics[width=5.5in]{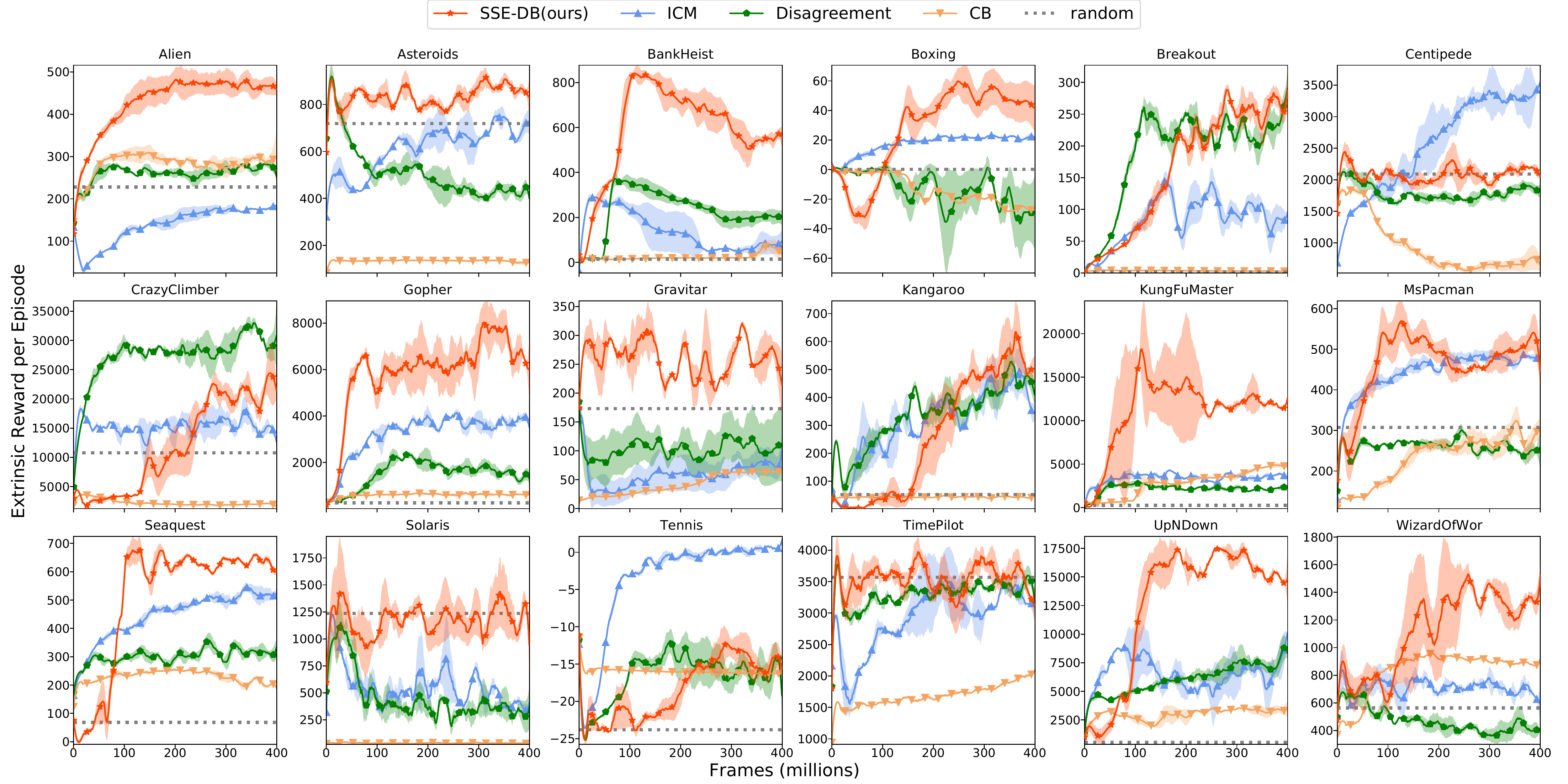}
\caption{The evaluation curve in Atari games. The different methods are trained with different intrinsic rewards. The extrinsic rewards are only used to measure the performance.
Each method was run with three random seeds.}
\label{fig:result-atari}
\end{figure*}

\subsection{The Main Results}

We evaluate all methods on Atari games with high-dimensional observations. The selected 18 games are frequently used in previous approaches for efficient exploration. The overall results are provided in Fig.~\ref{fig:result-atari}. We highlight that in our experiments, the agents are trained without accessing the extrinsic rewards. The extrinsic rewards are ONLY utilized to evaluate the performance of the policies obtained from self-supervised exploration. Our experiments show that SSE-DB performs the best in 15 of 18 tasks, suggesting that dynamics-relevant feature together with DB-bonus helps the exploration of states with high extrinsic rewards.

In addition, since pure exploration without extrinsic rewards is very difficult in most tasks, a random baseline is required to show whether the exploration methods learn meaningful behaviors. We adopt the random score from DQN \citep{DQN-2015} and show the comparison in the figure. In Solaris, Centipede and TimePilot, our method obtains similar scores to random policy, which suggests that relying solely on intrinsic rewards is insufficient to solve these tasks. We also observe that SSE-DB is suboptimal in Tennis. A possible explanation is that for Tennis, the prediction error based methods, such as ICM, could capture additional information in the intrinsic rewards. For example, in Tennis, the prediction error becomes higher when the ball moves faster or when the agent hits the ball towards a tricky direction. The prediction-error based methods can naturally benefit from such nature of the game. In contrast, SSE-DB encourages exploration based on the information gain from learning the dynamics-relevant representation, which may not capture such critical events in Tennis.

\subsection{Robustness in the Presence of Noises}

\paragraph{Observation Noises.} To analyze the robustness of SSE-DB to observation noises, an important evaluation metric is the performance of SSE-DB in the presence of dynamics-irrelevant information. A particularly challenging distractor is the white-noise \citep{largescale-2019,CB-2019}, which incorporates random task-irrelevant patterns to the observations. In such a scenario, a frequently visited state by injecting an unseen noise pattern may be mistakenly assigned with a high intrinsic reward by curiosity or pseudo-count based methods.

\begin{wrapfigure}{r}{0cm}
\centering
\includegraphics[width=0.45\textwidth]{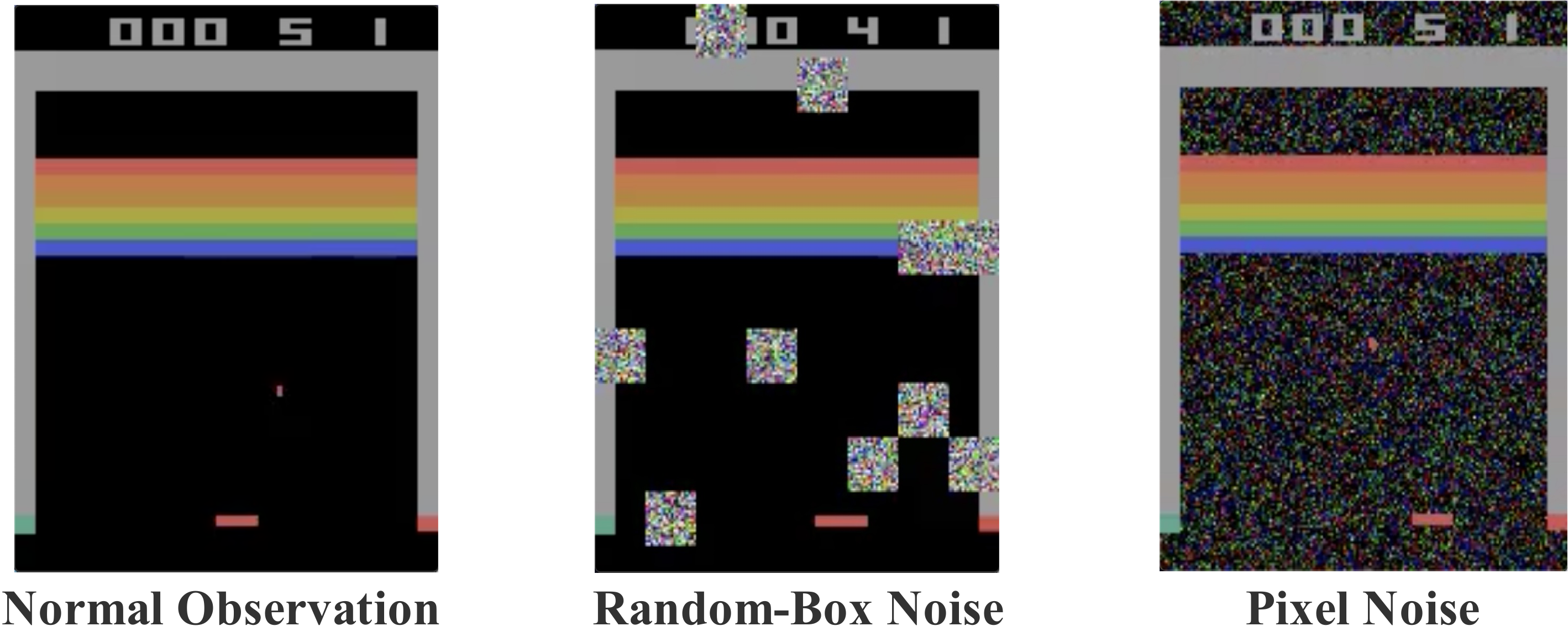}
\caption{Observation of Breakout with no distractor (left), random-box noise distractor (middle), and pixel noise distractor (right).}
\label{fig:random-state}
\end{wrapfigure}

We use two types of distractors for the observations of Atari games, namely, (1) the random-box noise distractor, which places boxes filled with random Gaussian noise over the raw pixels, and (2) the pixel-level noise distractor, which adds pixel-wise Gaussian noise to the observations. Fig.~\ref{fig:random-state} shows examples of the two types of distractors. In the sequel, we discuss results for the random-box noise distractor on selected Atari games, which we find sufficiently representative, and defer the complete report to Appendix~\ref{app:experiment-box} and \ref{app:experiment-pixel}.

Fig.~\ref{fig:result-random-box} shows the performance of the compared methods on Alien, Breakout and TimePilot with and without noises. We observe that SSE-DB outperforms ICM on Alien and TimePilot with random-box noises. Nevertheless, in Breakout, we observe that both the methods fail to learn informative policies. A possible explanation is that, in Breakout, the ball is easily masked by the box-shaped noise (i.e., middle of Fig.~\ref{fig:random-state}). The random-box noise therefore buries critical transition information of the ball, which hinders all the baselines to extract dynamics-relevant information and leads to failures on Breakout with random-box noise as shown in Fig.~\ref{fig:result-random-box}.

\begin{figure}[!t]
\centering
\subfigure[Atari with random-box noise]{\includegraphics[width=2.6in]{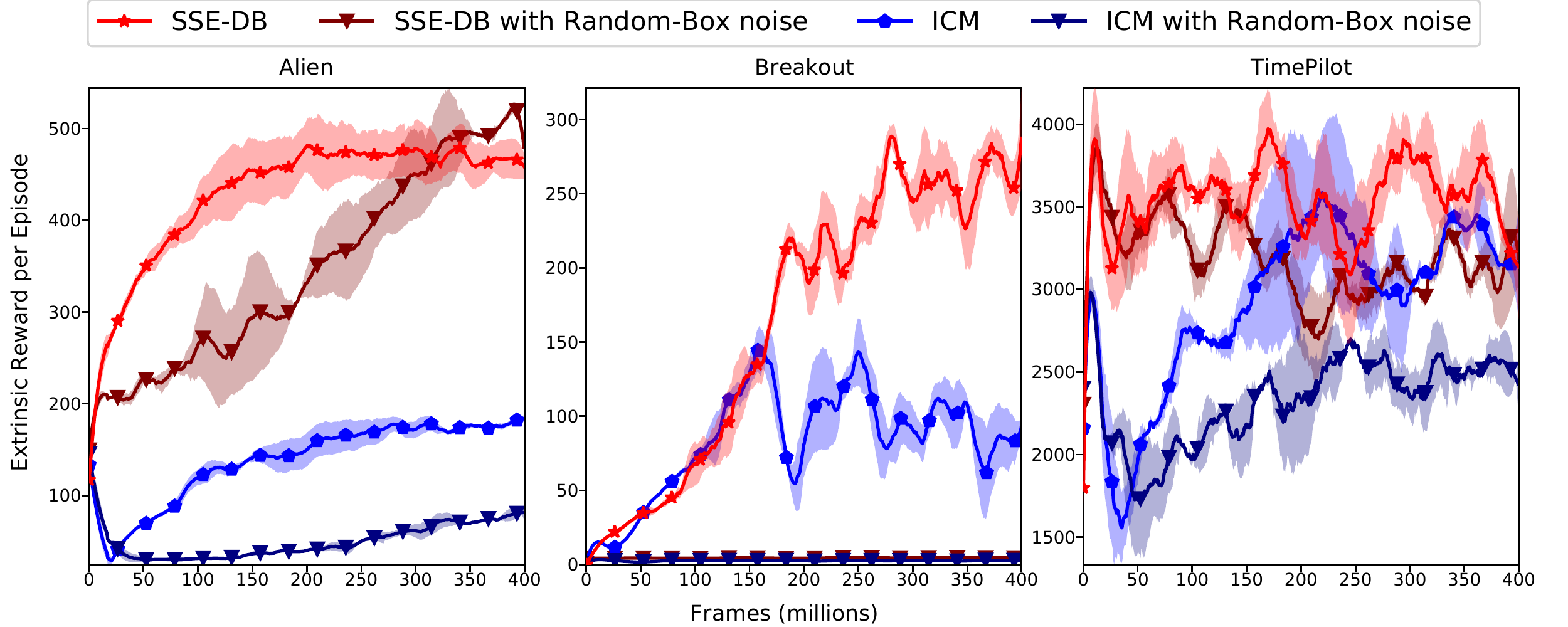}\label{fig:result-random-box}}
\subfigure[Atari with sticky actions]{\includegraphics[width=2.6in]{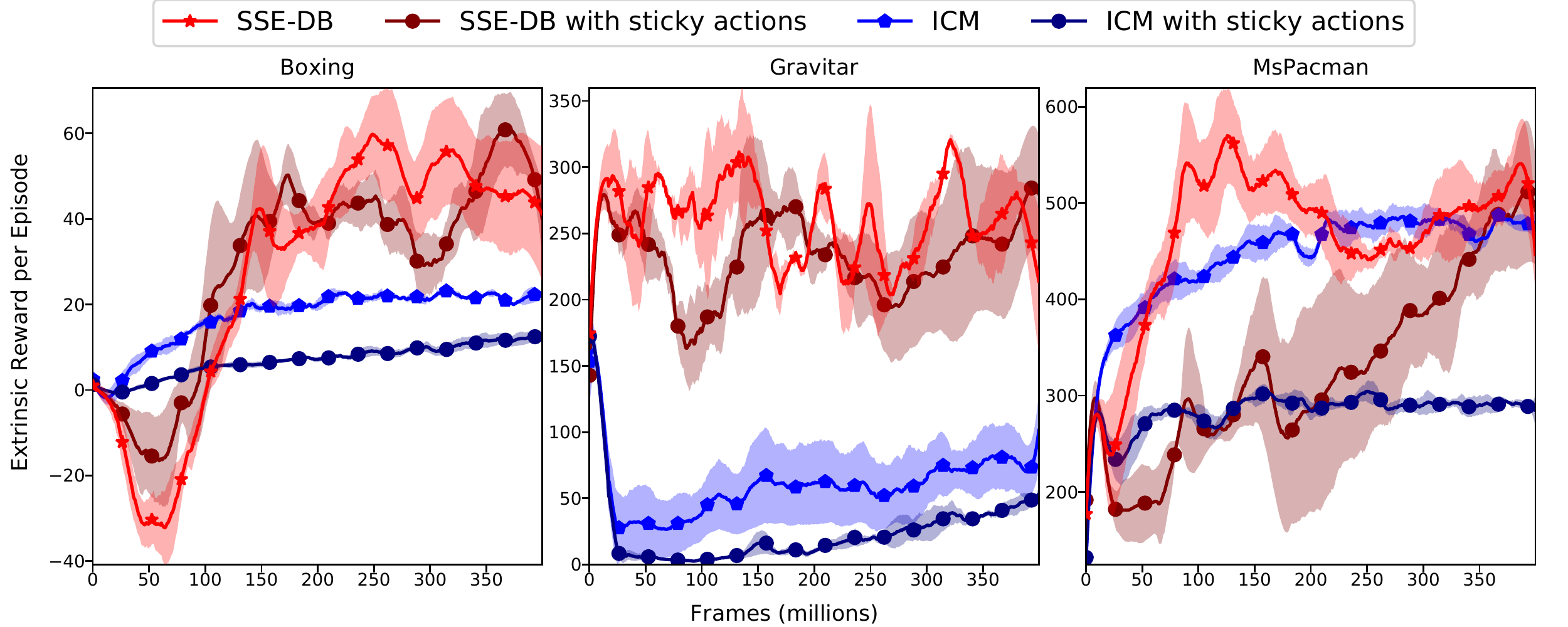}\label{fig:result-sticky}}
\caption{A comparison on selected Atari games with and without noises. (a) Random-box noise. In Alien and TimePilot, SSE-DB is barely affected by random-box noise. Both methods fail in Breakout. (b) Sticky actions. Illustration shows SSE-DB is barely affected by the sticky actions.}
\end{figure}

\paragraph{Action Noise.} In addition to observation noises, noises in actions also raise challenges for learning the transition dynamics. To further study the robustness of SSE-DB, we conduct experiments on Atari games with sticky actions \citep{atari-2018,disagree-2019}. At each time step, the agent may execute the previous action instead of the output action of current policy with a probability of $0.25$. We illustrate results on three selected Atari games, i.e., Boxing, Gravitar and MsPacman, in Fig.~\ref{fig:result-sticky} and defer the complete results to Appendix~\ref{app:experiment-sticky}. Our experiments show that SSE-DB is robust to action noises, whereas ICM suffers significant performance drop from the action noise.

\subsection{Visualization and Ablation Study}

\begin{figure}[t]
\centering
\includegraphics[width=3.6in]{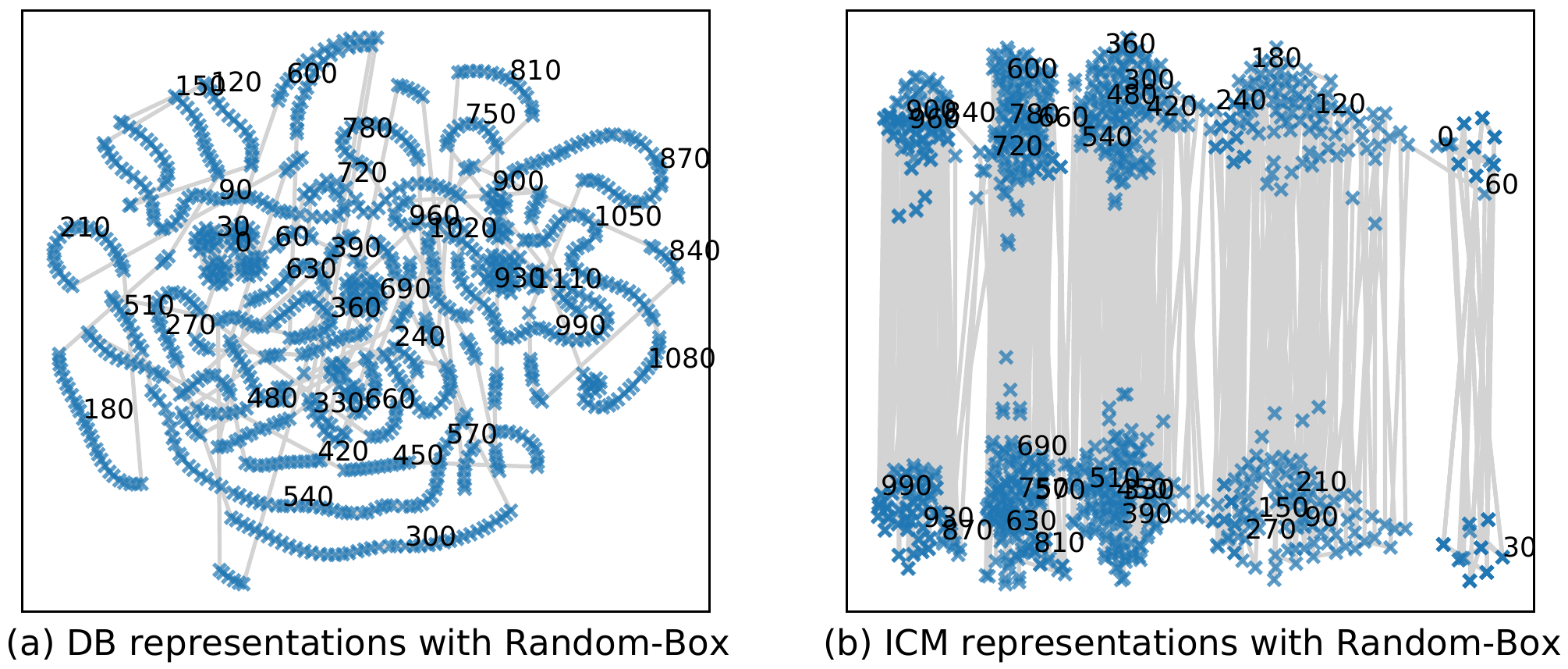}
\caption{Visualization of latent representations of MsPacman with random-box noise. The figures plot the representation of the same trajectory by DB (left) and ICM (right). Consecutive states are connected by the shaded lines. Numbers on the representations are the corresponding number of time steps. Representation learned by DB tends to align consecutive states on the same curve.}
\label{fig:tsne}
\end{figure}

\paragraph{Visualization of the Learned Representations.} To understand the latent representation learned by the DB model, we visualize the learned $Z$ with t-SNE \cite{tsne-2008} plots, which projects a 128d $z$-vector to a 2d one through dimensionality reduction. We compare the representations learned by SSE-DB and ICM with random-box noise. We illustrate the learned representations of MsPacman in Fig.~\ref{fig:tsne}. According to the visualization, representations learned by DB tends to align temporally-consecutive movements on the same curve. Moreover, each segment of a curve corresponds to a semantic component of the trajectory, such as eating pellets aligned together or avoiding ghosts. The segments of a curve end up with critical states, including death and reborn of the agent and ghosts. The visualization indicates that DB well captures the dynamics-relevant information in the learned representations. In contrast, such temporally-consecutive patterns are missing in the learned representation of ICM.

\paragraph{Visualization of the DB bonus.} We provide visualization of the DB-bonus in Appendix~\ref{app:vis-bonus}. The results show the DB-bonus effectively encourages the agent to explore the informative transitions.

\paragraph{Ablation Study.} The training of DB loss consists of multiple components, including $I_{\rm pred}$, $I_{\rm nce}$, $I_{\rm upper}$, and the momentum observation encoder. To analyze the importance of the components, we conduct an ablation study by removing each of them respectively and evaluate the DB model correspondingly. The ablation study suggests that the all the components are crucial for learning effective dynamics-relevant representations. In addition, we observe that $I_{\rm upper}$ is particularly important in the environment with dynamics-irrelevant noise. Please refer to Appendix~\ref{app:experiment-abla} for details.

\section{Conclusion}

In this paper, we introduce Dynamic Bottleneck model that learns dynamics-relevant representations based on the IB principle. 
Based on the DB model, we further propose DB-bonus based on the DB model for efficient exploration. We establish theoretical connections between the proposed DB-bonus and provably efficient bonuses. Our experiments show that SSE-DB outperforms several strong baselines in stochastic environments for self-supervised exploration. Moreover, we observe that DB learns well-structured representations and the DB-bonus characterizes informative transitions for exploration.
For our future work, we wish to combine the DB representation to effective exploration methods including BeBold \citep{bebold-2020} and NGU \citep{ngu-2020} to enhance their robustness in stochastic environments.

\section*{Acknowledgements}

The authors thank Tencent Robotics X and Vector Institute for the computation resources supported. Part of the work was done during internship at Tencent Robotics X lab. The authors also thank the anonymous reviewers, whose invaluable suggestions have helped us to improve the paper.

\bibliography{db-neurips}

\begin{thebibliography}{61}
\providecommand{\natexlab}[1]{#1}
\providecommand{\url}[1]{\texttt{#1}}
\expandafter\ifx\csname urlstyle\endcsname\relax
  \providecommand{\doi}[1]{doi: #1}\else
  \providecommand{\doi}{doi: \begingroup \urlstyle{rm}\Url}\fi

\bibitem[Abbasi-Yadkori et~al.(2011)Abbasi-Yadkori, P{\'a}l, and
  Szepesv{\'a}ri]{bandit-2011}
Yasin Abbasi-Yadkori, D{\'a}vid P{\'a}l, and Csaba Szepesv{\'a}ri.
\newblock Improved algorithms for linear stochastic bandits.
\newblock In \emph{Advances in neural information processing systems},
  volume~24, pages 2312--2320, 2011.

\bibitem[Alemi et~al.(2017)Alemi, Fischer, Dillon, and Murphy]{dvib-2017}
Alexander~A Alemi, Ian Fischer, Joshua~V Dillon, and Kevin Murphy.
\newblock Deep variational information bottleneck.
\newblock In \emph{International Conference on Learning Representations}, 2017.

\bibitem[Auer(2002)]{auer2002using}
Peter Auer.
\newblock Using confidence bounds for exploitation-exploration trade-offs.
\newblock \emph{Journal of Machine Learning Research}, 3\penalty0
  (Nov):\penalty0 397--422, 2002.

\bibitem[Auer and Ortner(2007)]{auer-2007}
Peter Auer and Ronald Ortner.
\newblock Logarithmic online regret bounds for undiscounted reinforcement
  learning.
\newblock In \emph{Advances in Neural Information Processing Systems}, pages
  49--56, 2007.

\bibitem[Azar et~al.(2017)Azar, Osband, and Munos]{minmax-2017}
Mohammad~Gheshlaghi Azar, Ian Osband, and R{\'e}mi Munos.
\newblock Minimax regret bounds for reinforcement learning.
\newblock In \emph{International Conference on Machine Learning}, pages
  263--272, 2017.

\bibitem[Badia et~al.(2020)Badia, Sprechmann, Vitvitskyi, Guo, Piot,
  Kapturowski, Tieleman, Arjovsky, Pritzel, Bolt, and Blundell]{ngu-2020}
Adri{\`a}~Puigdom{\`e}nech Badia, Pablo Sprechmann, Alex Vitvitskyi, Daniel
  Guo, Bilal Piot, Steven Kapturowski, Olivier Tieleman, Martin Arjovsky,
  Alexander Pritzel, Andrew Bolt, and Charles Blundell.
\newblock Never give up: Learning directed exploration strategies.
\newblock In \emph{International Conference on Learning Representations}, 2020.

\bibitem[Bai et~al.(2020)Bai, Liu, Wang, Liu, Wang, and Zhao]{bai-2020}
Chenjia Bai, Peng Liu, Zhaoran Wang, Kaiyu Liu, Lingxiao Wang, and Yingnan
  Zhao.
\newblock Variational dynamic for self-supervised exploration in deep
  reinforcement learning.
\newblock \emph{arXiv preprint arXiv:2010.08755}, 2020.

\bibitem[Bai et~al.(2021)Bai, Wang, Han, Hao, Garg, Liu, and
  Wang]{bai2021principled}
Chenjia Bai, Lingxiao Wang, Lei Han, Jianye Hao, Animesh Garg, Peng Liu, and
  Zhaoran Wang.
\newblock Principled exploration via optimistic bootstrapping and backward
  induction.
\newblock In \emph{International Conference on Machine Learning}, pages
  577--587. PMLR, 2021.

\bibitem[Bellemare et~al.(2016)Bellemare, Srinivasan, Ostrovski, Schaul,
  Saxton, and Munos]{count-2016}
Marc Bellemare, Sriram Srinivasan, Georg Ostrovski, Tom Schaul, David Saxton,
  and Remi Munos.
\newblock Unifying count-based exploration and intrinsic motivation.
\newblock In \emph{Advances in Neural Information Processing Systems}, pages
  1471--1479, 2016.

\bibitem[Bromley et~al.(1993)Bromley, Bentz, Bottou, Guyon, LeCun, Moore,
  S{\"a}ckinger, and Shah]{siamese-1993}
Jane Bromley, James~W Bentz, L{\'e}on Bottou, Isabelle Guyon, Yann LeCun, Cliff
  Moore, Eduard S{\"a}ckinger, and Roopak Shah.
\newblock Signature verification using a "siamese" time delay neural network.
\newblock \emph{International Journal of Pattern Recognition and Artificial
  Intelligence}, 7\penalty0 (04):\penalty0 669--688, 1993.

\bibitem[Burda et~al.(2019{\natexlab{a}})Burda, Edwards, Pathak, Storkey,
  Darrell, and Efros]{largescale-2019}
Yuri Burda, Harri Edwards, Deepak Pathak, Amos Storkey, Trevor Darrell, and
  Alexei~A Efros.
\newblock Large-scale study of curiosity-driven learning.
\newblock In \emph{International Conference on Learning Representations},
  2019{\natexlab{a}}.

\bibitem[Burda et~al.(2019{\natexlab{b}})Burda, Edwards, Storkey, and
  Klimov]{RND-2019}
Yuri Burda, Harrison Edwards, Amos Storkey, and Oleg Klimov.
\newblock Exploration by random network distillation.
\newblock In \emph{International Conference on Learning Representations},
  2019{\natexlab{b}}.

\bibitem[Cai et~al.(2020)Cai, Yang, Jin, and Wang]{oppo-2020}
Qi~Cai, Zhuoran Yang, Chi Jin, and Zhaoran Wang.
\newblock Provably efficient exploration in policy optimization.
\newblock In \emph{International Conference on Machine Learning}, pages
  1283--1294. PMLR, 2020.

\bibitem[Chen et~al.(2020{\natexlab{a}})Chen, Kornblith, Norouzi, and
  Hinton]{simclr-2020}
Ting Chen, Simon Kornblith, Mohammad Norouzi, and Geoffrey Hinton.
\newblock A simple framework for contrastive learning of visual
  representations.
\newblock In \emph{International Conference on Machine Learning},
  2020{\natexlab{a}}.

\bibitem[Chen et~al.(2020{\natexlab{b}})Chen, Fan, Girshick, and
  He]{moco2-2020}
Xinlei Chen, Haoqi Fan, Ross Girshick, and Kaiming He.
\newblock Improved baselines with momentum contrastive learning.
\newblock \emph{arXiv preprint arXiv:2003.04297}, 2020{\natexlab{b}}.

\bibitem[Choi et~al.(2019)Choi, Guo, Moczulski, Oh, Wu, Norouzi, and
  Lee]{Contingency-2019}
Jongwook Choi, Yijie Guo, Marcin Moczulski, Junhyuk Oh, Neal Wu, Mohammad
  Norouzi, and Honglak Lee.
\newblock Contingency-aware exploration in reinforcement learning.
\newblock In \emph{International Conference on Learning Representations}, 2019.

\bibitem[Dann and Brunskill(2015)]{dann-2015}
Christoph Dann and Emma Brunskill.
\newblock Sample complexity of episodic fixed-horizon reinforcement learning.
\newblock In \emph{Advances in Neural Information Processing Systems}, pages
  2818--2826, 2015.

\bibitem[Grill et~al.(2020)Grill, Strub, Altch{\'e}, Tallec, Richemond,
  Buchatskaya, Doersch, Avila~Pires, Guo, Gheshlaghi~Azar, et~al.]{BYOL-2020}
Jean-Bastien Grill, Florian Strub, Florent Altch{\'e}, Corentin Tallec, Pierre
  Richemond, Elena Buchatskaya, Carl Doersch, Bernardo Avila~Pires, Zhaohan
  Guo, Mohammad Gheshlaghi~Azar, et~al.
\newblock Bootstrap your own latent-a new approach to self-supervised learning.
\newblock In \emph{Advances in Neural Information Processing Systems},
  volume~33, 2020.

\bibitem[Hansen et~al.(2019)Hansen, Dabney, Barreto, Warde-Farley, Van~de
  Wiele, and Mnih]{hansen2019fast}
Steven Hansen, Will Dabney, Andre Barreto, David Warde-Farley, Tom Van~de
  Wiele, and Volodymyr Mnih.
\newblock Fast task inference with variational intrinsic successor features.
\newblock In \emph{International Conference on Learning Representations}, 2019.

\bibitem[He et~al.(2020)He, Fan, Wu, Xie, and Girshick]{moco-2020}
Kaiming He, Haoqi Fan, Yuxin Wu, Saining Xie, and Ross Girshick.
\newblock Momentum contrast for unsupervised visual representation learning.
\newblock In \emph{IEEE/CVF Conference on Computer Vision and Pattern
  Recognition}, pages 9729--9738, 2020.

\bibitem[Houthooft et~al.(2016)Houthooft, Chen, Duan, Schulman, De~Turck, and
  Abbeel]{vime-2016}
Rein Houthooft, Xi~Chen, Yan Duan, John Schulman, Filip De~Turck, and Pieter
  Abbeel.
\newblock Vime: Variational information maximizing exploration.
\newblock In \emph{Advances in Neural Information Processing Systems}, pages
  1109--1117, 2016.

\bibitem[Jaksch et~al.(2010)Jaksch, Ortner, and Auer]{regret-2010}
Thomas Jaksch, Ronald Ortner, and Peter Auer.
\newblock Near-optimal regret bounds for reinforcement learning.
\newblock \emph{Journal of Machine Learning Research}, 11\penalty0
  (Apr):\penalty0 1563--1600, 2010.

\bibitem[Jin et~al.(2018)Jin, Allen-Zhu, Bubeck, and Jordan]{jin2018q}
Chi Jin, Zeyuan Allen-Zhu, Sebastien Bubeck, and Michael~I Jordan.
\newblock Is q-learning provably efficient?
\newblock In \emph{Advances in Neural Information Processing Systems}, pages
  4863--4873, 2018.

\bibitem[Jin et~al.(2020)Jin, Yang, Wang, and Jordan]{jin-2019}
Chi Jin, Zhuoran Yang, Zhaoran Wang, and Michael~I Jordan.
\newblock Provably efficient reinforcement learning with linear function
  approximation.
\newblock In \emph{Conference on Learning Theory}, pages 2137--2143, 2020.

\bibitem[Kim et~al.(2019{\natexlab{a}})Kim, Kim, Jeong, Levine, and
  Song]{emi-2019}
Hyoungseok Kim, Jaekyeom Kim, Yeonwoo Jeong, Sergey Levine, and Hyun~Oh Song.
\newblock Emi: Exploration with mutual information.
\newblock In \emph{International Conference on Machine Learning}, pages
  3360--3369, 2019{\natexlab{a}}.

\bibitem[Kim et~al.(2019{\natexlab{b}})Kim, Nam, Kim, Kim, and Kim]{CB-2019}
Youngjin Kim, Wontae Nam, Hyunwoo Kim, Ji-Hoon Kim, and Gunhee Kim.
\newblock Curiosity-bottleneck: Exploration by distilling task-specific
  novelty.
\newblock In \emph{International Conference on Machine Learning}, pages
  3379--3388, 2019{\natexlab{b}}.

\bibitem[Laskin et~al.(2020)Laskin, Lee, Stooke, Pinto, Abbeel, and
  Srinivas]{RAD-2020}
Michael Laskin, Kimin Lee, Adam Stooke, Lerrel Pinto, Pieter Abbeel, and
  Aravind Srinivas.
\newblock Reinforcement learning with augmented data.
\newblock In \emph{Advances in Neural Information Processing Systems}, 2020.

\bibitem[Liu and Abbeel(2021{\natexlab{a}})]{liu2021aps}
Hao Liu and Pieter Abbeel.
\newblock Aps: Active pretraining with successor features.
\newblock In \emph{International Conference on Machine Learning}, pages
  6736--6747. PMLR, 2021{\natexlab{a}}.

\bibitem[Liu and Abbeel(2021{\natexlab{b}})]{liu2021behavior}
Hao Liu and Pieter Abbeel.
\newblock Behavior from the void: Unsupervised active pre-training.
\newblock \emph{arXiv preprint arXiv:2103.04551}, 2021{\natexlab{b}}.

\bibitem[Lopes et~al.(2012)Lopes, Lang, Toussaint, and Oudeyer]{count-2012}
Manuel Lopes, Tobias Lang, Marc Toussaint, and Pierre-Yves Oudeyer.
\newblock Exploration in model-based reinforcement learning by empirically
  estimating learning progress.
\newblock In \emph{Advances in Neural Information Processing Systems}, pages
  206--214, 2012.

\bibitem[Maaten and Hinton(2008)]{tsne-2008}
Laurens van~der Maaten and Geoffrey Hinton.
\newblock Visualizing data using t-sne.
\newblock \emph{Journal of machine learning research}, 9\penalty0
  (Nov):\penalty0 2579--2605, 2008.

\bibitem[Machado et~al.(2018)Machado, Bellemare, Talvitie, Veness, Hausknecht,
  and Bowling]{atari-2018}
Marlos~C Machado, Marc~G Bellemare, Erik Talvitie, Joel Veness, Matthew
  Hausknecht, and Michael Bowling.
\newblock Revisiting the arcade learning environment: Evaluation protocols and
  open problems for general agents.
\newblock \emph{Journal of Artificial Intelligence Research}, 61:\penalty0
  523--562, 2018.

\bibitem[Mazoure et~al.(2020)Mazoure, Tachet~des Combes, DOAN, Bachman, and
  Hjelm]{infomax-2020}
Bogdan Mazoure, Remi Tachet~des Combes, Thang~Long DOAN, Philip Bachman, and
  R~Devon Hjelm.
\newblock Deep reinforcement and infomax learning.
\newblock In \emph{Advances in Neural Information Processing Systems},
  volume~33, 2020.

\bibitem[Michelmore et~al.(2018)Michelmore, Kwiatkowska, and
  Gal]{michelmore2018evaluating}
Rhiannon Michelmore, Marta Kwiatkowska, and Yarin Gal.
\newblock Evaluating uncertainty quantification in end-to-end autonomous
  driving control.
\newblock \emph{arXiv preprint arXiv:1811.06817}, 2018.

\bibitem[Mnih et~al.(2015)Mnih, Kavukcuoglu, Silver, Rusu, Veness, Bellemare,
  Graves, Riedmiller, Fidjeland, Ostrovski, Petersen, Beattie, Sadik,
  Antonoglou, King, Kumaran, Wierstra, Legg, and Hassabis]{DQN-2015}
Volodymyr Mnih, Koray Kavukcuoglu, David Silver, Andrei~A. Rusu, Joel Veness,
  Marc~G. Bellemare, Alex Graves, Martin~A. Riedmiller, Andreas Fidjeland,
  Georg Ostrovski, Stig Petersen, Charles Beattie, Amir Sadik, Ioannis
  Antonoglou, Helen King, Dharshan Kumaran, Daan Wierstra, Shane Legg, and
  Demis Hassabis.
\newblock Human-level control through deep reinforcement learning.
\newblock \emph{Nature}, 518:\penalty0 529--533, 2015.

\bibitem[Oord et~al.(2018)Oord, Li, and Vinyals]{cl-2018}
Aaron van~den Oord, Yazhe Li, and Oriol Vinyals.
\newblock Representation learning with contrastive predictive coding.
\newblock \emph{arXiv preprint arXiv:1807.03748}, 2018.

\bibitem[Osband et~al.(2016)Osband, Blundell, Pritzel, and
  Van~Roy]{bootstrap-2016}
Ian Osband, Charles Blundell, Alexander Pritzel, and Benjamin Van~Roy.
\newblock Deep exploration via bootstrapped dqn.
\newblock In \emph{Advances in neural information processing systems}, pages
  4026--4034, 2016.

\bibitem[Ostrovski et~al.(2017)Ostrovski, Bellemare, van~den Oord, and
  Munos]{count-2017}
Georg Ostrovski, Marc~G Bellemare, A{\"a}ron van~den Oord, and R{\'e}mi Munos.
\newblock Count-based exploration with neural density models.
\newblock In \emph{International Conference on Machine Learning}, pages
  2721--2730, 2017.

\bibitem[Pathak et~al.(2017)Pathak, Agrawal, Efros, and
  Darrell]{curiosity-2017}
Deepak Pathak, Pulkit Agrawal, Alexei~A. Efros, and Trevor Darrell.
\newblock Curiosity-driven exploration by self-supervised prediction.
\newblock In \emph{International Conference on Machine Learning}, pages
  2778--2787, 2017.

\bibitem[Pathak et~al.(2019)Pathak, Gandhi, and Gupta]{disagree-2019}
Deepak Pathak, Dhiraj Gandhi, and Abhinav Gupta.
\newblock Self-supervised exploration via disagreement.
\newblock In \emph{International Conference on Machine Learning}, pages
  5062--5071, 2019.

\bibitem[Poupart et~al.(2006)Poupart, Vlassis, Hoey, and Regan]{count-2006}
Pascal Poupart, Nikos Vlassis, Jesse Hoey, and Kevin Regan.
\newblock An analytic solution to discrete bayesian reinforcement learning.
\newblock In \emph{International conference on Machine learning}, pages
  697--704, 2006.

\bibitem[Rashid et~al.(2020)Rashid, Peng, Boehmer, and Whiteson]{opiq-2020}
Tabish Rashid, Bei Peng, Wendelin Boehmer, and Shimon Whiteson.
\newblock Optimistic exploration even with a pessimistic initialisation.
\newblock In \emph{International Conference on Learning Representations}, 2020.

\bibitem[Ratzlaff et~al.(2020)Ratzlaff, Bai, Fuxin, and Xu]{implicit-2020}
Neale Ratzlaff, Qinxun Bai, Li~Fuxin, and Wei Xu.
\newblock Implicit generative modeling for efficient exploration.
\newblock In \emph{International Conference on Machine Learning}, pages
  7985--7995. PMLR, 2020.

\bibitem[Schulman et~al.(2016)Schulman, Moritz, Levine, Jordan, and
  Abbeel]{gae-2016}
John Schulman, Philipp Moritz, Sergey Levine, Michael Jordan, and Pieter
  Abbeel.
\newblock High-dimensional continuous control using generalized advantage
  estimation.
\newblock In \emph{International Conference on Learning Representations}, 2016.

\bibitem[Schulman et~al.(2017)Schulman, Wolski, Dhariwal, Radford, and
  Klimov]{ppo-2017}
John Schulman, Filip Wolski, Prafulla Dhariwal, Alec Radford, and Oleg Klimov.
\newblock Proximal policy optimization algorithms.
\newblock \emph{arXiv preprint arXiv:1707.06347}, 2017.

\bibitem[Schwarzer et~al.(2021)Schwarzer, Anand, Goel, Hjelm, Courville, and
  Bachman]{momentum-2020}
Max Schwarzer, Ankesh Anand, Rishab Goel, R~Devon Hjelm, Aaron Courville, and
  Philip Bachman.
\newblock Data-efficient reinforcement learning with self-predictive
  representations.
\newblock In \emph{International Conference on Learning Representations}, 2021.

\bibitem[Seo et~al.(2021)Seo, Chen, Shin, Lee, Abbeel, and Lee]{RE3-2021}
Younggyo Seo, Lili Chen, Jinwoo Shin, Honglak Lee, Pieter Abbeel, and Kimin
  Lee.
\newblock State entropy maximization with random encoders for efficient
  exploration.
\newblock In \emph{Proceedings of the 38th International Conference on Machine
  Learning}, volume 139, pages 9443--9454, 2021.

\bibitem[Shwartz-Ziv and Tishby(2017)]{ibn-2017}
Ravid Shwartz-Ziv and Naftali Tishby.
\newblock Opening the black box of deep neural networks via information.
\newblock \emph{arXiv preprint arXiv:1703.00810}, 2017.

\bibitem[Srinivas et~al.(2020)Srinivas, Laskin, and Abbeel]{curl-2020}
Aravind Srinivas, Michael Laskin, and Pieter Abbeel.
\newblock Curl: Contrastive unsupervised representations for reinforcement
  learning.
\newblock In \emph{International Conference on Machine Learning}, 2020.

\bibitem[Sutton and Barto(2018)]{sutton-2018}
Richard~S Sutton and Andrew~G Barto.
\newblock \emph{Reinforcement learning: An introduction}.
\newblock MIT press, 2018.

\bibitem[Taiga et~al.(2020)Taiga, Fedus, Machado, Courville, and
  Bellemare]{bonus-2020}
Adrien~Ali Taiga, William Fedus, Marlos~C Machado, Aaron Courville, and Marc~G
  Bellemare.
\newblock On bonus based exploration methods in the arcade learning
  environment.
\newblock In \emph{International Conference on Learning Representations}, 2020.

\bibitem[Tang et~al.(2017)Tang, Houthooft, Foote, Stooke, Chen, Duan, Schulman,
  DeTurck, and Abbeel]{tang-2017}
Haoran Tang, Rein Houthooft, Davis Foote, Adam Stooke, OpenAI~Xi Chen, Yan
  Duan, John Schulman, Filip DeTurck, and Pieter Abbeel.
\newblock \# exploration: A study of count-based exploration for deep
  reinforcement learning.
\newblock In \emph{Advances in Neural Information Processing Systems}, pages
  2753--2762, 2017.

\bibitem[Tao et~al.(2020)Tao, Fran{\c{c}}ois-Lavet, and Pineau]{tao2020novelty}
Ruo~Yu Tao, Vincent Fran{\c{c}}ois-Lavet, and Joelle Pineau.
\newblock Novelty search in representational space for sample efficient
  exploration.
\newblock In \emph{Advances in Neural Information Processing Systems},
  volume~33, 2020.

\bibitem[Tishby and Zaslavsky(2015)]{ibn-2015}
Naftali Tishby and Noga Zaslavsky.
\newblock Deep learning and the information bottleneck principle.
\newblock In \emph{2015 IEEE Information Theory Workshop (ITW)}, pages 1--5.
  IEEE, 2015.

\bibitem[Tishby et~al.(2000)Tishby, Pereira, and Bialek]{inbo-2000}
Naftali Tishby, Fernando~C Pereira, and William Bialek.
\newblock The information bottleneck method.
\newblock \emph{arXiv preprint physics/0004057}, 2000.

\bibitem[Wang et~al.(2021)Wang, Yang, and Wang]{wang2020provably}
Lingxiao Wang, Zhuoran Yang, and Zhaoran Wang.
\newblock Provably efficient causal reinforcement learning with confounded
  observational data.
\newblock In \emph{Advances in Neural Information Processing Systems}, 2021.

\bibitem[West(1984)]{west1984outlier}
Mike West.
\newblock Outlier models and prior distributions in bayesian linear regression.
\newblock \emph{Journal of the Royal Statistical Society: Series B
  (Methodological)}, 46\penalty0 (3):\penalty0 431--439, 1984.

\bibitem[Yang et~al.(2021)Yang, Tang, Bai, Liu, Hao, Meng, and
  Liu]{yang2021exploration}
Tianpei Yang, Hongyao Tang, Chenjia Bai, Jinyi Liu, Jianye Hao, Zhaopeng Meng,
  and Peng Liu.
\newblock Exploration in deep reinforcement learning: A comprehensive survey.
\newblock \emph{arXiv preprint arXiv:2109.06668}, 2021.

\bibitem[Yarats et~al.(2021{\natexlab{a}})Yarats, Fergus, Lazaric, and
  Pinto]{Proto-2021}
Denis Yarats, Rob Fergus, Alessandro Lazaric, and Lerrel Pinto.
\newblock Reinforcement learning with prototypical representations.
\newblock In \emph{Proceedings of the 38th International Conference on Machine
  Learning}, volume 139, pages 11920--11931, 2021{\natexlab{a}}.

\bibitem[Yarats et~al.(2021{\natexlab{b}})Yarats, Kostrikov, and
  Fergus]{DrQ-2020}
Denis Yarats, Ilya Kostrikov, and Rob Fergus.
\newblock Image augmentation is all you need: Regularizing deep reinforcement
  learning from pixels.
\newblock In \emph{International Conference on Learning Representations},
  2021{\natexlab{b}}.

\bibitem[Zhang et~al.(2020)Zhang, Xu, Wang, Wu, Keutzer, Gonzalez, and
  Tian]{bebold-2020}
Tianjun Zhang, Huazhe Xu, Xiaolong Wang, Yi~Wu, Kurt Keutzer, Joseph~E
  Gonzalez, and Yuandong Tian.
\newblock Bebold: Exploration beyond the boundary of explored regions.
\newblock \emph{arXiv preprint arXiv:2012.08621}, 2020.

\end{thebibliography}

\appendix

\clearpage

\title{Dynamic Bottleneck for Robust Self-Supervised Exploration (Appendix)}

\section{Proof of Equation \ref{eq:nce}}\label{app:nce}
\begin{proof}
We introduce the latent variable $C$ to indicate whether the next-state encoding $s_{t+1}$ the representation $z_t$ are drawn from the joint density ($C=1$) or from the product of marginals ($C=0$). For the positive sample with $C=1$, we have 
\[p(z_t,s_{t+1}|C=1)=p(z_t,s_{t+1}),\]
which is the joint density. For the negative example, we have 
\[p(z_t,s_{t+1}|C=0)=p(z_t)p(s_{t+1}),\]
which is the product of marginals. In the sequel, we use contrastive objective containing one positive pair and $N$ negative pairs. Correspondingly, the priors of latent $C$ takes the form of 
\[p(C=1)=1/(N+1),\:\:p(C=0)=N/(N+1).\]
The following computation is adopted from \cite{cl-2018}. By Bayesian rule, the posterior of $C=1$ takes the form of
\begin{equation}\label{eq::1111}
\begin{split}
&\log p(C=1|z_t,s_{t+1})\\
&=\log\frac{p(C=1) p(z_t,s_{t+1}|C=1)}{p(C=0) p(z_t,s_{t+1}|C=0)+p(C=1)p(z_t,s_{t+1}|C=1)}\\
&=\log\frac{p(C=1) p(z_t,s_{t+1})}{p(C=0)p(z_t)p(s_{t+1})+p(C=1)p(z_t,s_{t+1})}\\
&=\log\frac{p(z_t,s_{t+1})}{Np(z_t)p(s_{t+1})+p(z_t,s_{t+1})}\\
&=-\log (1+N\frac{p(z_t)p(s_{t+1})}{p(z_t,s_{t+1})})\\
&\leq-\log N+\log \frac{p(z_t,s_{t+1})}{p(z_t)p(s_{t+1})}.
\end{split}
\end{equation}
By taking expectation with respect to $p(z_t,s_{t+1}|C=1)$ on both sides of \eqref{eq::1111} and rearranging, we obtain
\begin{equation}
I(Z_t,S_{t+1})\geq \log N+\mathbb{E}_{p(z_t,s_{t+1})}[\log p(C=1|z_t,s_{t+1})],
\end{equation}
where $I(Z_t;S_{t+1})=\mathbb{E}_{p(z_t,s_{t+1})}[\log \frac{p(z_t,s_{t+1})}{p(z_t)p(s_{t+1})}]$ is the mutual information between the distributions of the dynamics-relevant representation $Z_t$ and next-observation encoding $S_{t+1}$. In the sequel, we follow \cite{cl-2018} and fit a score function $h(z_t,s_{t+1})$ to maximize the log-likelihood of positive samples by solving a binary classification problem,
\begin{equation}
\mathcal{L}_{\rm nce}(h)=\mathbb{E}_{p(z_t,s_{t+1})}\mathbb{E}_{S^-}\bigg[\log \frac{\exp (h(z_t,s_{t+1}))}{\sum_{s_j\in S^-\cup{s_{t+1}}}\exp (h(z_t,s_j))}\bigg].
\end{equation}
where the denominator involves both the positive and negative pairs. If $h$ is sufficiently expressive, the optimal solution of the binary classifier is $h^*(z_t,s_{t+1})=p(C=1|z_t,s_{t+1})$.
Thus, we have 
\begin{equation}\label{eq::2222}
\begin{split}
I(Z_t,S_{t+1})&\geq \log N+\mathbb{E}_{p(z_t,s_{t+1})}[\log p(C=1|z_t,s_{t+1})]\\
&= \log N+\mathbb{E}_{p(z_t,s_{t+1})} [\log h^*(z_t,s_{t+1})]\\
&\geq \log N+\mathbb{E}_{p(z_t,s_{t+1})} [\log h^*(z_t,s_{t+1})-\log \sum\nolimits_{s_j\in S^-\cup{s_{t+1}}}\exp (h^*(z_t,s_j))]\\
&= \log N+\mathcal{L}_{\rm nce}(h^*)=\log N+\max_h \mathcal{L}_{\rm nce}(h)\\
& \geq \log N+\mathcal{L}_{\rm nce}(h).
\end{split}
\end{equation}
The third line holds since $h^*(z_t,s_{t+1})\in [0,1]$ and, hence, the added term is strictly negative. Since $N$ is a constant decided by the training batch in DB, it suffices to maximize $\mathcal{L}_{\rm nce}(h)$ in \eqref{eq::2222} to maximize the mutual information $I(Z_t,S_{t+1})$.
\end{proof}

\newpage
\section{Pseudocode for DB training}\label{app:db-train}

The network of DB in Fig.~\ref{fig:db-network} contains four modules, we introduce the network architecture as follows.
\begin{itemize}[leftmargin=10pt]
\item \emph{Observation Encoder $f^S_o(\cdot;\theta_o)$ and $f^S_m(\cdot;\theta_m)$}. The observation encoder contains a main network and a momentum network, which are used to extract features of $o_t$ and $o_{t+1}$, respectively. Each network contains three convolution layers as Conv(filter=32, kernel-size=8, strides=4) $\rightarrow$ Conv(filter=64, kernel-size=4, strides=2)$\rightarrow$ Conv(filter=64, kernel-size=3, strides=1). We adopt Leaky ReLU as teh activation function. The architecture is similar to DQN without FCNs \citep{DQN-2015}. The input to observation encoder is image and the output is a vector $s\in \mathbb{R}^{512}$.
\item \emph{Representation Posterior $g^Z(\cdot;\phi)$}. The posterior $g^Z(s_t,a_t;\phi)$ represents the dynamics-relevant information extracted from state and action. We concatenate $s_t\in \mathbb{R}^{512}$ and one-hot vector $a_t\in\mathbb{R}^{|\mathcal{A}|}$ as the input. The processing flow is FCN(units=256) $\rightarrow$ ResNet(units=(256,256)) $\rightarrow$ ResNet(units=(256,256)) $\rightarrow$ FCN(units=256), where ResNet is a residual network with two layers. The output of $g^Z$ is a diagonal Gaussian with mean $\mu_z\in \mathbb{R}^{128}$ and variance $\sigma_z\in \mathbb{R}^{128}$. We use soft-plus activation to make the variance positive. The representation $z\in \mathbb{R}^{128}\sim \mathcal{N}(\mu_z,\sigma_z)$.
\item \emph{Prediction Head $q(\cdot;\psi)$}. The prediction head $q(z;\psi)$ contains three ResNets and two FCNs. We process the input $z\in \mathbb{R}^{128}$ by FCN(units=256) $\rightarrow$ ResNet(units=(256,256)) $\rightarrow$ ResNet(units=(256,256)) $\rightarrow$ FCN(units=512) $\rightarrow$ ResNet(units=(512,512)), where ResNet is a residual network with two layers. The output is a mean vector, which has the same dimensions as $s_{t+1}$, and an additional variance estimation. The output of $q_{\psi}$ is a diagonal Gaussian that has the same variance in each dimension.
\item \emph{Projection head $f^P_o(\cdot;\varphi_o)$ and $f^P_m(\cdot;\varphi_m)$}. The projection heads map $\bar{q}(z_t)$ and $s_{t+1}$ to low-dimensional space for contrastive estimation. The projection head contains two FCNs with 256 and 128 units, respectively. We follow \citet{simclr-2020} and adopt a normalization layer at each layer.
\end{itemize}

\begin{algorithm*}[h]
\caption{Pseudocode for DB training, PyTorch-like}
\label{alg:code}
\definecolor{codeblue}{rgb}{0.25,0.5,0.5}
\definecolor{codekw}{rgb}{0.85, 0.18, 0.50}
\lstset{
  backgroundcolor=\color{white},
  basicstyle=\fontsize{9.0pt}{9.0pt}\ttfamily\selectfont,
  columns=fullflexible,
  breaklines=true,
  captionpos=b,
  commentstyle=\fontsize{9.0pt}{9.0pt}\color{codeblue},
  keywordstyle=\fontsize{9.0pt}{9.0pt}\color{codekw},
}
\begin{lstlisting}[language=python,mathescape=true]
# $f^S_o$, $f^S_m$: observation encoders, $g^Z$: representation posterior, $q$: prediction head
# $f^P_o$, $f^P_m$: projection heads, $\mathcal{W}$: weight matrix in contrastive loss

for (o, a, o_next) in loader: # Load 16 episodes from 16 actors. N=128*16-1.
    s, s_next = $f^S_o$(o), $f^S_m$(o_next).detach()   # observation encoder 
    z_dis = $g^Z$(s,a)              # Gaussian distribution of Z
    I_upper = KL(z_dis, $N(0,I)$)   # KL-divergence to compress representation (a upper bound)

    z = r_sample(z_dis)   # reparameterization
    s_pred_dis = q(z)   # prediction head, the output is a Gaussian distribution
    I_pred = log(s| s_pred_dis)   # predictive objective (lower bound)

    s_pred = mean(s_pred_dis)       # take the mean value of prediction      
    s_pred_proj = $f^P_o$(s_pred)   # projection head
    s_next_proj = $f^P_m$(s_next).detach()  # momentum projection head
    
    logits = matmul(s_pred_proj, matmul($\mathcal{W}$, s_next_proj.T))  # (N-1) x (N-1)
    logits = logits - max(logits, axis=1) # subtract max from logits for stability
    labels = arange(logits.shape[0]) 
    I_nce = -CrossEntropyLoss(logits, labels)   # contrastive estimation (lower bound)
    
    L = $\alpha_1\cdot$I_upper - $\alpha_2\cdot$I_pred - $\alpha_3\cdot$I_nce  # total loss function
    L.backward()  # back-propagate
    update($f^S_o$, $g^Z$, $q$, $f^P_o$, $\mathcal{W}$)   # Adam update the parameters
    $\theta_m$ = $\tau\cdot$ $\theta_m$+(1-$\tau$) $\cdot \theta_o$
    $\theta_m$ = $\tau\cdot$ $\varphi_m$+(1-$\tau$) $\cdot \varphi_m$
\end{lstlisting}
\end{algorithm*}

We use 128 actors in experiments, and each episode contains 128 steps. Since the batch size 128*128 is too large for GPU memory (RTX-2080Ti), we follow the implementation of ICM and Disagreement by using experiences from 16 actors for each training step. We iterate 8 times to sample all experiences from 128 actors. As a result, the corresponding negative sample size $|S^{-}|$ is $16*128-1$ for contrastive estimation.
\newpage

\section{Proof of the DB-bonus}\label{app:proof-bonus}

\subsection{Background: LSVI-UCB}\label{app:optimistic-lsvi}

The algorithmic description of LSVI-UCB \citep{jin-2019} is given in Algorithm~\ref{alg:lsvi}. The feature map of the state-action pair is denoted as $\eta:\mathcal{S}\times\mathcal{A}\rightarrow\mathbb{R}^d$. The transition kernel and reward function are assumed to be linear in $\eta$. Under such a setting, it is known that for any policy $\pi$, the corresponding action-value function $Q_t(s,a)=\chi_t^{\top}\eta(s,a)$ is linear in $\eta$ \citep{jin-2019}. Each iteration of LSVI-UCB consists of two parts. First, in line 3-6, the agent executes the policy according to $Q_t$ for an episode. Second, in line 7-11, the parameter $\chi_t$ of $Q$-function is updated in closed-form by following the regularized least-squares 
\begin{equation*}
\chi_t\leftarrow\arg\min_{\chi\in\mathbb{R}^d}\sum\nolimits_{i=0}^{m}\bigl[r_t(s_t^i,a_t^i)+\max_{a\in\mathcal{A}}Q_{t+1}(s_{t+1}^{i},a)-\chi^{\top}\eta(s_t^{i},a_t^{i})\bigr]^2+\lambda\|\chi\|^2,
\end{equation*}
where $m$ is the number of episodes, and $i$ is the index of episodes. The least-squares problem has the following closed form solution ,
$$
\chi_t=\Lambda_t^{-1}\sum_{\tau=0}^{m}\eta(x_t^{i},a_t^{i})\bigl[r_t(x_t^{i},a_t^{i})+\max_a Q_{t+1}(x_{t+1}^{i},a)\bigr],
$$ 
where $\Lambda_t$ is the Gram matrix. The action-value function is estimated by $Q_t(s,a)\approx \chi_t^{\top}\eta(s,a)$. 

LSVI-UCB uses UCB-bonus (line 10) to construct the confidence bound of $Q$-function as $r^{\rm ucb}=\beta\big[\eta(s,a)^\top\Lambda_t^{-1}\eta(s,a)\big]^{\nicefrac{1}{2}}$ \citep{bandit-2011}, which measures the epistemic uncertainty of the corresponding state-action pairs. Theoretical analysis shows that LSVI-UCB achieves a near-optimal worst-case regret of $\tilde{\mathcal{O}}(\sqrt{d^3 T^3 L^3})$ with proper selections of $\beta$ and $\lambda$, where $L$ is the total number of steps. We refer to \citet{jin-2019} for the detailed analysis. 

\begin{algorithm}[h]
\caption{LSVI-UCB for linear MDP}
\label{alg:lsvi}
\begin{algorithmic}[1]
\STATE {{\bf Initialize:} $\Lambda_t\leftarrow\lambda\cdot\mathbf{I}$ and $w_h\leftarrow 0$}
\FOR {episode $m=0$ {\bfseries to} $M-1$}
\STATE {Receive the initial state $s_0$}\label{alg:lsvi-interact}
\FOR {step $t=0$ {\bfseries to} $T-1$}
\STATE {Take action $a_t=\arg\max_{a\in\mathcal{A}}Q_t(s_t,a)$ and observe $s_{t+1}$.}
\ENDFOR\label{alg:lsvi-end-interact}
\FOR {step $t=T-1$ {\bfseries to} $0$}\label{alg:lsvi-train}
\STATE {$\Lambda_t\leftarrow\sum_{i=0}^{m}\eta(x_t^{i},a_t^{i})\eta(x_t^{i},a_t^{i})^\top+\lambda \cdot \mathrm{\mathbf{I}}$}
\STATE {$\chi_t\leftarrow\Lambda_t^{-1}\sum_{i=0}^{m}\eta(x_t^{i},a_t^{i})[r_t(x_t^{i},a_t^{i})+\max_a Q_{t+1}(x_{t+1}^{i},a)]$}\label{alg:lsvi-solve}
\STATE {$Q_t(\cdot,\cdot)=\min\{\chi_t^\top\eta(\cdot,\cdot)+\alpha[\eta(\cdot,\cdot)^\top\Lambda_t^{-1}\eta(\cdot,\cdot)]^{\nicefrac{1}{2}},T\}$}\label{alg:lsvi-bonus}
\ENDFOR\label{alg:lsvi-end-train}
\ENDFOR
\end{algorithmic}
\end{algorithm}

\subsection{Proof of connection to LSVI-UCB}\label{app:proof-ucb}

In linear function approximation, we set the representation of DB to be linear in state-action encoding, namely, $z_t = W_t\eta(s_t,a_t)\in \mathbb{R}^{c}$, where $W_t\in \mathbb{R}^{c\times d}$ and $\eta(s_t,a_t)\in\mathbb{R}^{d}$. To capture the prediction error in DB, we conduct regression to recover the next state $s_{t+1}$ and consider the following regularized least-square problem,
\begin{equation}\label{eq::regression_problem}
w_t\leftarrow\arg\min_{W}\sum_{i=0}^{m}\bigl\|s_{t+1}^i-W_t\eta(s_t^i,a_t^i)\bigr\|^2_F + \lambda \|W\|^2_F,
\end{equation}
where $\|\cdot\|_F$ denotes the Frobenius norm. In the sequel, we consider a Bayesian linear regression perspective of (\ref{eq::regression_problem}) that captures the intuition behind the DB-bonus. Our objective is to approximate the next-state prediction $s_{t+1}$ via fitting the parameter $W$, such that
\[
W \eta(s_t, a_t) \approx s_{t+1},
\]
where $s_{t+1}$ is given. We assume that we are given a Gaussian prior of the initial parameter $W \sim \mathcal N(\mathbf{0}, \mathrm{\mathbf{I}}/\lambda)$. With a slight abuse of notation, we denote by $W_t$ the Bayesian posterior of the parameter $W$ given the set of independent observations $\mathcal{D}_m = \{(s^i_t, a^i_t, s^i_{t+1})\}_{i \in [0,m]}$. 
We further define the following noise with respect to the least-square problem in (\ref{eq::regression_problem}),
\begin{equation}\label{eq:noise}
\epsilon = s_{t+1} - W_t \eta(s_t, a_t)\in\mathbb{R}^{c},
\end{equation}
where $(s_t, a_t, s_{t+1})$ follows the distribution of trajectory. The following theorem justifies the DB-bonus under the Bayesian linear regression perspective.
\begin{thm}[Formal Version of Theorem \ref{thm:ucb-informal}]
\label{thm:ucb-formal}
We assume that $\epsilon$ follows the standard multivariate Gaussian distribution $\mathcal N(0, \mathbf{I})$ given the  state-action pair $(s_t, a_t)$ and the parameter $W$. Assuming $W$ follows the Gaussian prior $\mathcal{N}(0, \mathrm{\mathbf{I}}/\lambda)$. We define
\begin{equation}\label{eq::def_lambda}
\Lambda_t=\sum_{i=0}^{m}\eta(x_t^{i},a_t^{i})\eta(x_t^{i},a_t^{i})^\top+\lambda \cdot \mathrm{\mathbf{I}}.
\end{equation}
It then holds for the posterior of $W_t$ given the set of independent observations $\mathcal{D}_m = \{(s^i_t, a^i_t, s^i_{t+1})\}_{i \in [0,m]}$ that
\[\sqrt{\frac{c}{4}}\:\big[\eta(t)^{\top}\Lambda_t^{-1}\eta(t)\big]^{\nicefrac{1}{2}} \leq I(W_t;[s_t,a_t, S_{t+1}]|\mathcal{D}_m)^{\nicefrac{1}{2}} \leq \sqrt{\frac{c}{2}}\:\big[\eta(t)^{\top}\Lambda_t^{-1}\eta(t)\big]^{\nicefrac{1}{2}}.\]

\end{thm}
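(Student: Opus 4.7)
The plan is to exploit the Gaussian conjugacy of the regression model in \eqref{eq::regression_problem}: with a Gaussian prior on $W$ and Gaussian observation noise $\epsilon$, the posterior of $W$ given $\mathcal{D}_m$ is again Gaussian, so the mutual information of interest is that between two jointly Gaussian objects and admits a closed-form logarithmic expression. First I would note that $(s_t,a_t)$ is chosen independently of $W_t$ given $\mathcal{D}_m$, so by the chain rule for mutual information,
\[
I\bigl(W_t;[s_t,a_t,S_{t+1}]\,\big|\,\mathcal{D}_m\bigr)=I\bigl(W_t;S_{t+1}\,\big|\,s_t,a_t,\mathcal{D}_m\bigr),
\]
which reduces the task to evaluating a single conditional mutual information.

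Next I would derive the posterior of $W_t$ explicitly. Because $\epsilon\sim\mathcal{N}(0,\mathbf{I})$ has independent coordinates and the prior $W\sim\mathcal{N}(0,\mathbf{I}/\lambda)$ factorizes across rows, the Bayesian regression decouples into $c$ independent Bayesian linear regressions, one for each coordinate of the target $s_{t+1}$. A standard conjugate-prior computation then yields that each row $w^{(k)}$ of $W_t$ has Gaussian posterior with covariance $\Lambda_t^{-1}$, where $\Lambda_t$ is as in \eqref{eq::def_lambda}, and the rows remain independent. Consequently, viewing $W_t\eta(s_t,a_t)$ as a linear functional of $W_t$, its posterior is Gaussian with covariance $\bigl(\eta(s_t,a_t)^{\top}\Lambda_t^{-1}\eta(s_t,a_t)\bigr)\mathbf{I}$.

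The mutual information then drops out of two Gaussian differential-entropy computations. Conditionally on $W_t,s_t,a_t$ we have $S_{t+1}\sim\mathcal{N}(W_t\eta(s_t,a_t),\mathbf{I})$, so the conditional entropy equals $(c/2)\log(2\pi e)$. Marginalizing over the posterior of $W_t$, $S_{t+1}$ is Gaussian with covariance $\bigl(1+\eta(s_t,a_t)^{\top}\Lambda_t^{-1}\eta(s_t,a_t)\bigr)\mathbf{I}$, giving entropy $(c/2)\log\bigl(2\pi e\,(1+\eta^{\top}\Lambda_t^{-1}\eta)\bigr)$. Subtracting the two yields the clean closed form
\[
I\bigl(W_t;S_{t+1}\,\big|\,s_t,a_t,\mathcal{D}_m\bigr)=\frac{c}{2}\log\!\bigl(1+\eta(s_t,a_t)^{\top}\Lambda_t^{-1}\eta(s_t,a_t)\bigr).
\]

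To conclude I would apply the elementary sandwich $x/2\le\log(1+x)\le x$, multiply by $c/2$, and take square roots to recover both inequalities in the theorem. The upper bound is immediate from $\log(1+x)\le x$, which holds for all $x>-1$. The main obstacle is the lower bound, which only holds when $\eta(s_t,a_t)^{\top}\Lambda_t^{-1}\eta(s_t,a_t)$ is bounded above by a constant of order one; this is a mild and standard condition in the linear MDP literature and can be enforced by a norm assumption on $\eta$ together with a suitable choice of the prior precision $\lambda$, so that the Gram matrix $\Lambda_t$ is well conditioned from the start. Once this boundedness is in place, the two-sided bound on $\log(1+\cdot)$ delivers exactly the constants $\sqrt{c/4}$ and $\sqrt{c/2}$ claimed in the statement.
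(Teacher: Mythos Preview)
Your proposal is correct and lands on the same closed form $I=\tfrac{c}{2}\log(1+\eta^\top\Lambda_t^{-1}\eta)$ as the paper, followed by the identical sandwich $x/2\le\log(1+x)\le x$ under the normalization $\|\eta\|_2\le 1$, $\lambda=1$ (which the paper also invokes explicitly). The route to that closed form, however, differs. The paper works on the \emph{parameter} side: it vectorizes $W_t\in\mathbb{R}^{c\times d}$ into $\mathrm{vec}(W_t)\in\mathbb{R}^{cd}$, builds a block-diagonal design $\tilde\eta\in\mathbb{R}^{cd\times c}$ and block-diagonal Gram matrix $\tilde\Lambda_t$, expresses the mutual information as an entropy reduction in $W_t$, namely $\tfrac12\log\det(\tilde\Lambda_t+\tilde\eta\tilde\eta^\top)-\tfrac12\log\det(\tilde\Lambda_t)$, and then applies the matrix determinant lemma plus the block structure to collapse this to $\tfrac{c}{2}\log(1+\eta^\top\Lambda_t^{-1}\eta)$. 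You instead work on the \emph{observation} side: after the chain-rule reduction that drops the non-random $(s_t,a_t)$, you use the row-wise decoupling of the regression to compute $H(S_{t+1}\mid s_t,a_t,\mathcal{D}_m)-H(S_{t+1}\mid W_t,s_t,a_t,\mathcal{D}_m)$ directly as a sum of $c$ independent scalar Gaussian entropy differences. Your path is shorter and avoids both the vectorization bookkeeping and the determinant lemma; the paper's path makes the full posterior covariance of $W_t$ explicit and stays closer to the Gram-matrix language of LSVI-UCB, which is convenient for the subsequent tabular specialization. Either way the constants $\sqrt{c/4}$ and $\sqrt{c/2}$ fall out identically.
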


\begin{proof}
The proof follows the standard analysis of Bayesian linear regression. See, e.g., \citet{west1984outlier} for the details. We introduce the following notations for $W_t\in\mathbb{R}^{c\times d}$, $\eta(s_t,a_t)\in \mathbb{R}^{d}$, and $W_t\eta(s_t,a_t)\in\mathbb{R}^c$,
\begin{equation}
W_t=
\begin{bmatrix}
w_{11} & \cdots & w_{1d}\\
\vdots &        & \vdots\\
w_{c1} & \cdots & w_{cd}
\end{bmatrix}\in\mathbb{R}^{c\times d},\quad
\eta(s_t,a_t)=
\begin{bmatrix}
\eta_{1}\\
\eta_{2}\\
\vdots\\
\eta_{d}
\end{bmatrix}\in\mathbb{R}^d,
\quad
W_t\eta(s_t,a_t)=
\begin{bmatrix}
\begin{smallmatrix}
\sum_{k=1}^d w_{1k}\eta_k\\
\sum_{k=1}^d w_{2k}\eta_k\\
\vdots\\
\sum_{k=1}^d w_{ck}\eta_k\\
\end{smallmatrix}
\end{bmatrix}\in\mathbb{R}^c.
\end{equation}

For the analysis, we vectorize matrix $W_t$ and define a new matrix $\tilde{\eta}$. Meanwhile, we define $\tilde{\eta}$ by repeating $\eta(s_t,a_t)$ for $c$ times in the diagonal. Specifically, we define ${\rm vec}(W_t) \in \mathbb{R}^{cd}$ and $\tilde{\eta}(s_t,a_t)\in \mathbb{R}^{cd\times c}$ as follows,
\begin{equation}\label{eq::vecw}
{\rm vec}(W_t)=
\begin{bmatrix}
\begin{smallmatrix}
w_{11}\\
\vdots\\
w_{1d}\\
w_{21}\\
\vdots\\
w_{2d}\\
\vdots\\
\vdots\\
w_{c1}\\
\vdots\\
w_{cd}
\end{smallmatrix}
\end{bmatrix}\in\mathbb{R}^{cd},\quad
\tilde{\eta}(s_t,a_t)=
\begin{bmatrix}
\begin{smallmatrix}
\eta(s_t,a_t) & 0             & \cdots & 0            \\
0             & \eta(s_t,a_t) & \cdots & 0            \\
\vdots        & \vdots        & \ddots & \vdots       \\
0             & 0             & \cdots & \eta(s_t,a_t)\\
\end{smallmatrix}
\end{bmatrix}=
\begin{bmatrix}
\begin{smallmatrix}
\eta_{1} & 0        & \cdots & 0       \\
\vdots   &          &        &         \\
\eta_{d} & 0        & \cdots & 0       \\
         & \eta_{1} & \cdots & 0       \\ 
\vdots   & \vdots   &        & \vdots  \\
         & \eta_{d} & \cdots & 0       \\
\vdots   & \vdots   &        & \vdots  \\
\vdots   & \vdots   &        & \vdots  \\
0        & 0        & \cdots & \eta_{1}\\
\vdots   & \vdots   &        & \vdots  \\
0        & 0        & \cdots & \eta_{d}
\end{smallmatrix}
\end{bmatrix}\in \mathbb{R}^{cd\times c},
\end{equation}
Then we have ${\rm vec}(W_t)^{\top}\tilde{\eta}(s_t,a_t)=W_t\eta(s_t,a_t)$ according to block matrix multiplication and \eqref{eq::vecw}. Formally,
\begin{equation}\label{eq::vecw-w}
{\rm vec}(W_t)^{\top}\tilde{\eta}(s_t,a_t)=
\begin{bmatrix}
\sum_{k=1}^d w_{1k}\eta_k \\
\sum_{k=1}^d w_{2k}\eta_k \\
\vdots \\
\sum_{k=1}^d w_{ck}\eta_k \\
\end{bmatrix}^\top=W_t\eta(s_t,a_t)\in\mathbb{R}^c.
\end{equation}

Our objective is to compute the posterior density $W_t = W | \mathcal{D}_m$, where $\mathcal{D}_m = \{(s^i_t, a^i_t, s^i_{t+1})\}_{i \in [0,m]}$ is the set of observations. The target of the linear regression is $s_{t+1}$. By the assumption that $\epsilon$ follows the standard Gaussian distribution, we obtain that 
\begin{equation}\label{eq:pf_density_y}
s_{t+1} | (s_t, a_t), W_t \sim \mathcal{N}\bigl(W_t \eta(s_t, a_t), \mathbf{I}\bigr).
\end{equation}
Because we have ${\rm vec}(W_t)^{\top}\tilde{\eta}(s_t,a_t)=W_t\eta(s_t,a_t)$ according to \eqref{eq::vecw-w}, we have 
\begin{equation}\label{eq:pf_density_y2}
s_{t+1} | (s_t, a_t), W_t \sim \mathcal{N}\bigl({\rm vec}(W_t)^{\top} \tilde{\eta}(s_t, a_t), \mathbf{I}\bigr).
\end{equation}

Recall that we have the prior distribution $W \sim \mathcal{N}(0, \mathrm{\mathbf{I}}/\lambda)$, then the prior of ${\rm vec}(W) \sim \mathcal{N}(0, \mathrm{\mathbf{I}}/\lambda)$. It holds from Bayes rule that 
\begin{equation}\label{eq:pf_bayes_rule}
\log p({\rm vec}(W_t) | \mathcal{D}_m) = \log p({\rm vec}(W_t)) + \log p(\mathcal{D}_m | {\rm vec}(W_t)) + Const.
\end{equation}
Plugging \eqref{eq:pf_density_y2} and the probability density function of Gaussian distribution into \eqref{eq:pf_bayes_rule} yields
\begin{equation}\label{eq:pf_density_posterior}
\begin{split}
\log p({\rm vec}(W_t) | \mathcal{D}_m) &= - \|{\rm vec}(W_t)\|^2/2 -\sum^m_{i=1} \| {\rm vec}(W_t)\tilde{\eta}(s^i_t, a^i_t) - s^i_{t+1}\|^2/2 + Const\\
&=-({\rm vec}(W_t)-\tilde{\mu}_t)^\top \tilde{\Lambda}^{-1}_t({\rm vec}(W_t)-\tilde{\mu}_t)/2 + Const,
\end{split}
\end{equation}
where we define 
\[
\tilde{\mu}_t = \tilde{\Lambda}^{-1}_t \sum^m_{i=0}\tilde{\eta}(s^i_t, a^i_t) s^i_{t+1}\in\mathbb{R}^{cd}, \qquad \tilde{\Lambda}_t=\sum_{i=0}^{m}\tilde{\eta}(s_t^i,a_t^i)\tilde{\eta}(x_t^i,a_t^i)^\top+\lambda \cdot \mathrm{\mathbf{I}}\in\mathbb{R}^{cd\times cd}.
\]
Thus, by \eqref{eq:pf_density_posterior}, we obtain that ${\rm vec}(W_t) = W| \mathcal{D}_m \sim \mathcal N(\tilde{\mu}_t, \tilde{\Lambda}^{-1}_t)$. We have the covariance matrix of ${\rm vec}(W_t)$ is
\begin{equation}\label{eq::var-lambda}
{\rm Var}({\rm vec}(W_t))=\tilde{\Lambda}_t^{-1}
\end{equation}

The $\tilde{\Lambda}_t$ term accumulates previous $\tilde{\eta}(s^i_t,a^i_t)\tilde{\eta}(s^i_t,a^i_t)^{\top}$. Since we have
\begin{equation}
\begin{split}
\tilde{\eta}(s^i_t,a^i_t)\tilde{\eta}(s^i_t,a^i_t)^{\top}&=
\begin{bmatrix}
\begin{smallmatrix}
\eta(t) & 0 & \cdots & 0     \\
 0     & \eta(t) & \cdots & 0     \\
\vdots & \vdots & \ddots & \vdots\\
0      & 0 & \cdots & \eta(t)\\
\end{smallmatrix}
\end{bmatrix}
\begin{bmatrix}
\begin{smallmatrix}
\eta(t)^{\top} & 0      & \cdots & 0     \\
 0     & \eta(t)^{\top} & \cdots & 0         \\
\vdots & \vdots         & \ddots & \vdots\\
0      & 0              & \cdots & \eta(t)^{\top}\\
\end{smallmatrix}
\end{bmatrix}\\
&=
\begin{bmatrix}
\begin{smallmatrix}
\eta(t)\eta(t)^{\top} & 0 & \cdots & 0     \\
 0     & \eta(t)\eta(t)^{\top} & \cdots & 0     \\
\vdots & \vdots & \ddots & \vdots\\
0      & 0 & \cdots & \eta(t)\eta(t)^{\top}\\
\end{smallmatrix}
\end{bmatrix}\in\mathbb{R}^{cd\times cd},
\end{split}
\end{equation}
where $\eta(t)=\eta(s^i_t,a^i_t)\in\mathbb{R}^d$, and $\eta(t)\eta(t)^{\top}\in\mathbb{R}^{d\times d}$ repeats for $c$ times, we further expand the matrix $\tilde{\Lambda}_t$ as,
\begin{equation}\label{eq::tildelambda}
\begin{split}
\tilde{\Lambda}_t=\sum_{i=0}^{m}\tilde{\eta}(s_t^i,a_t^i)\tilde{\eta}(x_t^i,a_t^i)^\top+\lambda \mathrm{\mathbf{I}}	&=
\begin{bmatrix}
\begin{smallmatrix}
\sum\eta(t)\eta(t)^{\top}+\lambda I & 0 & \cdots & 0     \\
 0     & \sum\eta(t)\eta(t)^{\top}+\lambda I & \cdots & 0     \\
\vdots & \vdots & \ddots & \vdots\\
0      & 0 & \cdots & \sum\eta(t)\eta(t)^{\top}+\lambda I\\
\end{smallmatrix}
\end{bmatrix}\\
&=
\begin{bmatrix}
\Lambda_t & 0 & \cdots & 0     \\
 0        & \Lambda_t & \cdots & 0     \\
\vdots    & \vdots & \ddots & \vdots\\
0         & 0 & \cdots & \Lambda_t \\
\end{bmatrix},
\end{split}
\end{equation}
where we follow the definition of $\Lambda_t$ in \eqref{eq::def_lambda}. Further, the mutual information
\begin{equation}\label{vecw-info}
\begin{split}
I({\rm vec}(W_t)&;[s_t,a_t, S_{t+1}]|\mathcal{D}_m)=\mathcal{H}({\rm vec}(W_t)|\mathcal{D}_m)-\mathcal{H}({\rm vec}(W_t)|(s_t,a_t, S_{t+1})\cup\mathcal{D}_m)\\
&=\frac{1}{2}\log\det\big({\rm Var}({\rm vec}(W_t)|\mathcal{D}_m)\big)-\frac{1}{2}\log \det\big({\rm Var}({\rm vec}(W_t)|(s_t,a_t, S_{t+1})\cup\mathcal{D}_m)\big).
\end{split}
\end{equation}
Plugging \eqref{eq::var-lambda} into \eqref{vecw-info}, we have
\begin{equation}\label{eq::info-w}
\begin{split}
I({\rm vec}(W_t);[s_t,a_t, S_{t+1}]|\mathcal{D}_m)&=\frac{1}{2}\log\det\big(\tilde{\Lambda}_t^{-1}\big)-\frac{1}{2}\log \det\big((\tilde{\Lambda}_{t}^{\dagger})^{-1}\big)\\
&=\frac{1}{2}\log\det\big(\tilde{\Lambda}_{t}^\dagger\big)-\frac{1}{2}\log \det\big(\tilde{\Lambda}_t\big)\\
&=\frac{1}{2}\log\det\big(\tilde{\Lambda}_t+\tilde{\eta}(s_t,a_t)\tilde{\eta}(s_t,a_t)^{\top}\big)-\frac{1}{2}\log \det\big(\tilde{\Lambda}_t\big)\\
&=\frac{1}{2}\log\det\big(\tilde{\eta}(s_t,a_t)^{\top}\tilde{\Lambda}_{t}^{-1}\tilde{\eta}(s_t,a_t)+\mathbf{I}\big),
\end{split}
\end{equation}
where the last line follows Matrix Determinant Lemma. Then, plugging \eqref{eq::tildelambda} into \eqref{eq::info-w}, we have 
\begin{equation}
\begin{split}
\tilde{\eta}(s_t,a_t)^{\top}\tilde{\Lambda}_{t}^{-1}\tilde{\eta}(s_t,a_t)&=
\begin{bmatrix}
\begin{smallmatrix}
\eta(t)^{\top} & 0      & \cdots & 0     \\
 0     & \eta(t)^{\top} & \cdots & 0         \\
\vdots & \vdots         & \ddots & \vdots\\
0      & 0              & \cdots & \eta(t)^{\top}
\end{smallmatrix}
\end{bmatrix}
\begin{bmatrix}
\begin{smallmatrix}
\Lambda_t^{-1} & 0 & \cdots & 0     \\
 0        & \Lambda_t^{-1} & \cdots & 0     \\
\vdots    & \vdots & \ddots & \vdots\\
0         & 0 & \cdots & \Lambda_t^{-1} 
\end{smallmatrix}
\end{bmatrix}
\begin{bmatrix}
\begin{smallmatrix}
\eta(t) & 0      & \cdots & 0     \\
 0     & \eta(t) & \cdots & 0         \\
\vdots & \vdots         & \ddots & \vdots\\
0      & 0              & \cdots & \eta(t)
\end{smallmatrix}
\end{bmatrix}\\&=
\begin{bmatrix}
\begin{smallmatrix}
\eta(t)^{\top}\Lambda_t^{-1}\eta(t) & 0      & \cdots & 0     \\
 0     & \eta(t)^{\top}\Lambda_t^{-1}\eta(t) & \cdots & 0         \\
\vdots & \vdots         & \ddots & \vdots\\
0      & 0              & \cdots & \eta(t)^{\top}\Lambda_t^{-1}\eta(t)
\end{smallmatrix}
\end{bmatrix}\in\mathbb{R}^{c\times c}.
\end{split}
\end{equation}
Thus, we have
\begin{equation}\label{eq::deteta}
\begin{split}
I({\rm vec}(W_t);[s_t,a_t,S_{t+1}]|\mathcal{D}_m)&=\frac{1}{2}\log\det\big(\tilde{\eta}(s_t,a_t)^{\top}\tilde{\Lambda}_{t}^{-1}\tilde{\eta}(s_t,a_t)+\mathbf{I}\big)\\
&=\frac{c}{2}\cdot\log\big(\eta(t)^{\top}\Lambda_t^{-1}\eta(t)+1\big).
\end{split}
\end{equation}
By assuming the L2-norm of feature vector $\|\eta\|_2\leq 1$ and $\lambda=1$ \citep{jin-2019,wang2020provably}, we have $\eta(t)^{\top}\Lambda_t^{-1}\eta(t)\leq 1$. Moreover, since $\frac{x}{2}\leq\log(1+x) \leq x$ for $x\in[0, 1]$, we have
\begin{equation}\label{eq:lsvi-tow-bonud}
\frac{\eta(t)^{\top}\Lambda_t^{-1}\eta(t)}{2} \leq \log \big(\eta(t)^{\top}\Lambda_t^{-1}\eta(t)+1\big)\leq \eta(t)^{\top}\Lambda_t^{-1}\eta(t).
\end{equation}
Therefore, we have
\begin{equation}\label{eq:lsvi-tow-bonud-2}
\sqrt{\frac{c}{4}}\:\big[\eta(t)^{\top}\Lambda_t^{-1}\eta(t)\big]^{\nicefrac{1}{2}} \leq I({\rm vec}(W_t);[s_t,a_t, S_{t+1}]|\mathcal{D}_m)^{\nicefrac{1}{2}} \leq \sqrt{\frac{c}{2}}\:\big[\eta(t)^{\top}\Lambda_t^{-1}\eta(t)\big]^{\nicefrac{1}{2}}.
\end{equation}

Moreover, since ${\rm vec}(W_t)$ is the vectorization of $W_t$, we have $I(W_t;[S_t,A_t]|\mathcal{D}_m)$=$I({\rm vec}(W_t);[S_t,A_t]|\mathcal{D}_m)$. Finally, we have
\begin{equation}
\sqrt{\frac{c}{4}}\:\big[\eta(t)^{\top}\Lambda_t^{-1}\eta(t)\big]^{\nicefrac{1}{2}} \leq I(W_t;[s_t,a_t, S_{t+1}]|\mathcal{D}_m)^{\nicefrac{1}{2}} \leq \sqrt{\frac{c}{2}}\:\big[\eta(t)^{\top}\Lambda_t^{-1}\eta(t)\big]^{\nicefrac{1}{2}}.
\end{equation}



Recall that we set $r^{\rm ucb}=\beta \big[\eta(t)^{\top}\Lambda_t^{-1}\eta(t)\big]^{\nicefrac{1}{2}}$. Thus, we obtain that 
\begin{equation}
\frac{1}{\sqrt{2}}\beta_0\cdot r^{\rm ucb} \leq I(W_t;[s_t,a_t,S_{t+1}]|\mathcal{D}_m)^{\nicefrac{1}{2}} \leq \beta_0\cdot r^{\rm ucb},
\end{equation}
where $\beta_0=\sqrt{\frac{c}{2\beta^2}}$ is a tuning parameter. 
Thus, we complete the proof of Theorem \ref{thm:ucb-informal}. 
\end{proof}

\subsection{Proof of Theorem \ref{thm:count-informal}}\label{app:proof-count}

\begin{proof}

In the sequel, we consider the tabular setting with finite state and action spaces. We define $d=|\mathcal{S}|\times|\mathcal{A}|$. In the tabular setting, we define the state-action encoding by the one-hot vector indexed by state-action pair $(s,a)\in \mathcal{S}\times\mathcal{A}$. For a state-action pair $(s_j,a_j) \in\mathcal{S}\times\mathcal{A}$, where $j\in [0,d-1]$, we have 
\begin{equation}
\eta(s_j,a_j)=
\begin{bmatrix}
0\\
\vdots\\
1\\
\vdots\\
0
\end{bmatrix}
\in \mathbb{R}^d,
\qquad
\eta(s_j,a_j)\eta(s_j,a_j)^{\top}=
\begin{bmatrix}
0      & \cdots & 0 & \cdots & 0\\
\vdots & \ddots &   &        & \vdots \\
0      &        & 1 &        & 0\\
\vdots &        &   & \ddots & \vdots \\
0      & \cdots & 0 & \cdots & 0\\
\end{bmatrix}
\in \mathbb{R}^{d\times d},
\end{equation}
where the value is 1 at the $j$-th entry and $0$ elsewhere. Moreover, the gram matrix $\Lambda_j=\sum_{i=0}^{m}\eta(x_j^i,a_j^i)\eta(x_j^{i},a_j^{i})^\top+\lambda \cdot \mathrm{\mathbf{I}}$ is the sum of all the matrices $\eta(s_j,a_j)\eta(s_j,a_j)^{\top}$ corresponding to the batch $\mathcal{D}_m$. That said, we have
\begin{equation}
\small
\Lambda_j=
\begin{bmatrix}
n_0+\lambda    & 0      &        & \cdots              & 0      \\
0      & n_1+\lambda    &        & \cdots              & 0      \\
\vdots &        & \ddots &                     & \vdots \\
0      &        &        & \!\!\!\!\!\!\!\!n_j+\lambda & 0      \\
\vdots &        &        & \:\:\:\:\:\:\ddots  & \vdots \\
0      & \cdots &        & \cdots              & n_{d-1}+\lambda
\end{bmatrix},
\end{equation}
where the $j$-th diagonal element of $\Lambda_j$ is the corresponding counts for state-action $(s_j,a_j)$, i.e., 
\[n_j=N_{s_j,a_j}.\] 
Thus, following the proof from \S\ref{app:proof-ucb}, the mutual-information scales with $\log \big(\eta(t)^{\top}\Lambda_t^{-1}\eta(t)+1\big)$. When the count $N_{s_j,a_j}$ of state-action pairs are large for all $j\in[0, d-1]$, we have $\log \big(\eta(t)^{\top}\Lambda_t^{-1}\eta(t)+1\big)\approx \eta(t)^{\top}\Lambda_t^{-1}\eta(t)$. Moreover, it holds that 
\begin{equation}
\big[\eta(s_j,a_j)^{\top}\Lambda_t^{-1}\eta(s_j,a_j)\big]^{\nicefrac{1}{2}}=\frac{1}{\sqrt{N_{s_j,a_j}+\lambda}}.
\end{equation}
Thus, in conclusion, we obtain that
\begin{equation}
\begin{split}
r^{\rm db}(s_j,a_j)= I\big(W_t;&[s_j,a_j,S_{j+1}]|\mathcal{D}_m\big) \approx \sqrt{\frac{c}{2}} \big[\eta(s_j,a_j)^{\top}\Lambda_t^{-1}\eta(s_j,a_j)\big]^{\nicefrac{1}{2}}\\
&=\sqrt{\frac{c}{2}}\frac{1}{\sqrt{N_{s_j,a_j}+\lambda}}=\beta_0 \cdot r^{\rm count}_j,
\end{split}
\end{equation}
where $c=|Z|$ is the same as the number of states $|\mathcal{S}|$ in tabular MDPs. Thus, we complete the proof of Theorem \ref{thm:count-informal}.
The bonus become smaller when the corresponding state-actions are visited more frequent, which follows the principle of count-based exploration \citep{count-2016,count-2017}. 
\end{proof}

\clearpage

\section{Implementation Detail}\label{app:db-detail}

\begin{table}[h!]
\small
  \caption{Experimental setup of PPO. The implementation of PPO is the same for all methods in our experiments.}
  \vspace{0.3em}
  \label{tab:hyper-ppo}
  \centering
  \begin{tabular}{p{0.15\columnwidth}p{0.14\columnwidth}p{0.63\columnwidth}}
    \toprule
    Hyperparameters & Value     & Description \\
    \midrule
    state space  & $84\times84\times4$ & Stacking 4 recent frames as the input to network.\\
	action repeat & 4 & Repeating each action 4 times. \\
    actor-critic network & conv(32,8,4) conv(64,4,2) conv(64,3,1) dense$\{512,512\}$ dense$|\mathcal{A}|+1$ & Using convolution(channels, kernel size, stride) layers first, then feed into two fully-connected layers, each with 512 hidden units. The outputs of the policy and the value function are split in the last layer. The output of policy is a Softmax function. The output of the value function is a single linear unit. \\
    entropy regularizer & $10^{-3}$ &  The loss function of PPO includes an entropy regularizer of the policy to prevent premature convergence\\
    $\gamma_{\rm ppo}$ & 0.99 & Discount factor of PPO \\
    $\lambda_{\rm ppo}$ & 0.95 & GAE parameters of PPO \\
	normalization & mean of 0, std of 1 & Normalizing the intrinsic reward and advantage function by following \citep{curiosity-2017,largescale-2019}. The advantage estimations in a batch is normalized to achieve a mean of 0 and std of 1. The intrinsic rewards is smoothen exponentially by dividing a running estimate of std. \\
    learning starts & 50000 & The agent takes random actions before learning starts. \\
    replay buffer size & 1M & The number of recent transitions stored in the replay buffer. \\
    training batches & 3 & The number of training batches after interacting an episode for all actors. \\
    optimizer & Adam & Adam optimizer is used for training. Detailed parameters: $\beta_1=0.9$, $\beta_2=0.999$, $\epsilon_{\rm ADAM}=10^{-7}$.\\
    mini-batch size & 32 & The number of training cases for gradient decent each time. \\
    learning rate & $10^{-4}$ & Learning rate for Adam optimizer. \\
	max no-ops & 30 & Maximum number no-op actions before an episode starts. \\
	actor number & 128 & Number of parallel actors to gather experiences.  \\
	$T$ & 128 & Episode length. \\
    \bottomrule
  \end{tabular}
\end{table}

\begin{table}[h!]
\small
  \caption{Experimental setup of DB}
  \vspace{0.3em}
  \label{tab:hyper-db}
  \centering
  \begin{tabular}{lll}
    \toprule
    Hyperparameters & Value     & Description \\
    \midrule
    network architecture  & Appendix \ref{app:db-train} & See Fig.\ref{fig:db-network}.\\
	$\alpha_1$ & \{0.1, 0.001\} & Factor for $I_{\rm upper}$. \\
	$\alpha_2$ & 0.1 & Factor for $I_{\rm pred}$. \\
	$\alpha_3$ & \{0.1, 0.01\} & Factor for $I_{\rm nce}$. \\
	$\tau$     & 0.999 & Factor for momentum update. \\
	\bottomrule
  \end{tabular}
\end{table}

\begin{table}[h!]
\small
\centering
\caption{Comparison of model complexity. (1) ICM estimates the inverse dynamics for feature extraction with 2.21M parameters. ICM also includes a dynamics model with 2.65M parameters to generate intrinsic rewards. (2) Disagreement use a fixed CNN for feature extraction thus the trainable parameters is 0. Disagreement uses an ensemble of dynamics with total 26.47M parameters to estimate the uncertainty. (3) CB does not requires any additional parameters compared to the actor-critic network. (4) DB requires slightly more parameters than ICM in representation learning, while do not uses additional parameters in estimating the dynamics since the DB-bonus is directly derived from the information gain of latent representation.}
\vspace{0.3em}
\label{tab:hyper-complexity}
\begin{tabular}{l|rr|r}
    \hline
    {~} & Feature extractor & Dynamic model & Total\cr
    \hline
    ICM          & 2.21M  & 2.65M  &  4.86M          \\
    Disagreement & 0M     & 26.47M &  26.47M         \\
    CB           & 0M     & 0M     &  0M             \\
    \textbf{DB (ours)}    & 5.15M  &  0M &  5.15M \\    
\hline
\end{tabular}
\end{table}

\clearpage
\section{Supplementary Experimental Results}\label{app:experiment}

\subsection{Random-Box Noise}\label{app:experiment-box}

We present the evaluation curves of Atari games with \emph{random-box noise} in Fig.~\ref{fig:result-atari-box}. 

~~~~\\~~\\~~\\~~\\
\begin{figure}[h]
\centering
\includegraphics[width=5.5in]{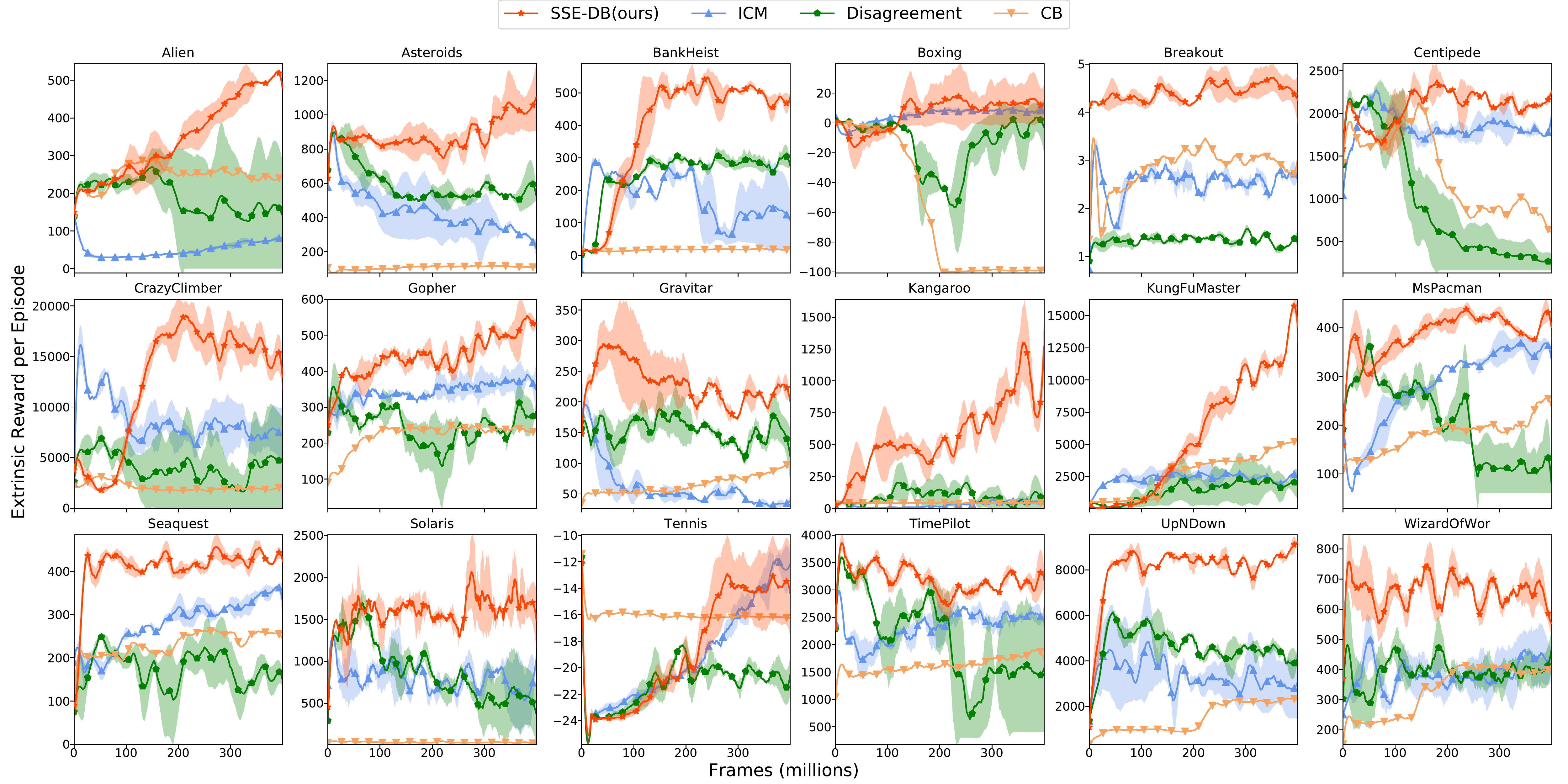}
\caption{Evaluation curves in Atari games with \emph{random-box noise}. SSE-DB outperforms all the baselines in 17 out of the 18 tasks. Comparing to standard Atari, we observe that the performance of SSE-DB with random-box noise is suboptimal in Breakout, Gopher, and WizardOfWar. In these tasks, dynamic-relevant information (e.g., the ball in Breakout, the tunnels in Gopher, and the worriors in WizardOfWar) are (partly) masked by random-boxes. Breakout is affected by random boxes significantly as the ball is too small to be distinguished from the noise. In addition, we observe that ICM is prone to random-box noise in most of the tasks. Disagreement also demonstrates decreased (e.g., CrazyClimber, Gopher, Kangaroo) or unstable (e.g., Alien, Centipede, TimePilot) performance in most of the tasks. A possible explanation is that the random-box noise affects the training of the dynamics models, thus bring adverse impact in estimating the Bayesian uncertainty through ensembles in Disagreement. In contrast, SSE-DB performs well in the presence of the random-box noise.}
\label{fig:result-atari-box}
\end{figure}

\clearpage

\subsection{Pixel Noise}\label{app:experiment-pixel}

We present the evaluation curves in Atari games with \emph{pixel noise} in Fig.~\ref{fig:result-atari-pixel} and Fig.~\ref{fig:compare-atari-pixel}. 

\vspace{3em}

\begin{figure}[h!]
\centering
\includegraphics[width=5.5in]{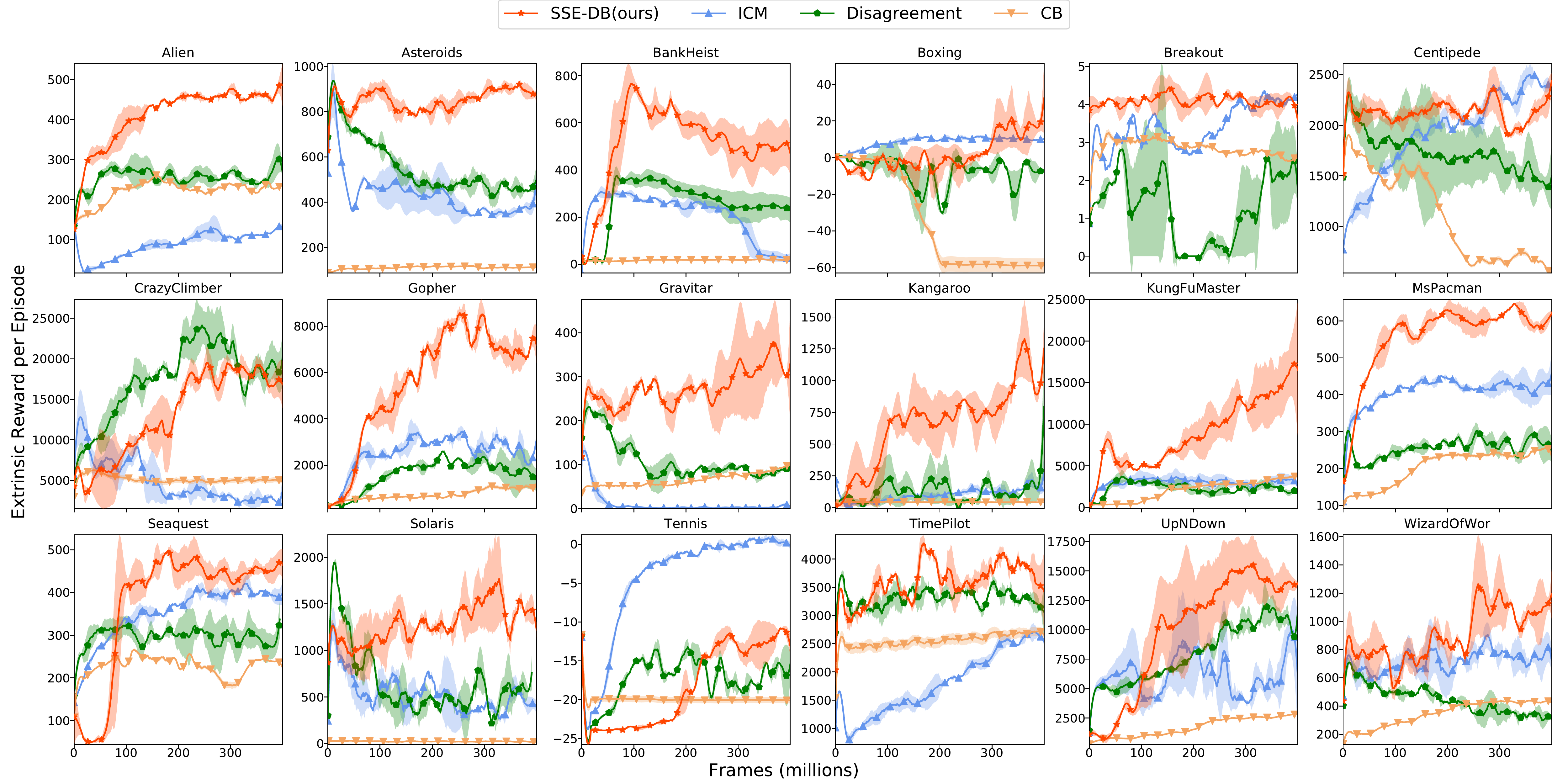}
\caption{Evaluation curves in Atari games with \emph{pixel noise}. SSE-DB outperforms all the baselines in 15 out of the 18 tasks. For games with the pixel noise, the performance of SSE-DB is similar to that for standard Atari games, expect for Breakout, in which the ball is too small to be distinguished from the pixel noise. We refer to Fig.~\ref{fig:compare-atari-pixel} for a comparison of results with and without pixel noise.}
\label{fig:result-atari-pixel}
\end{figure}

\vspace{3em}

\begin{figure}[h!]
\centering
\includegraphics[width=5.5in]{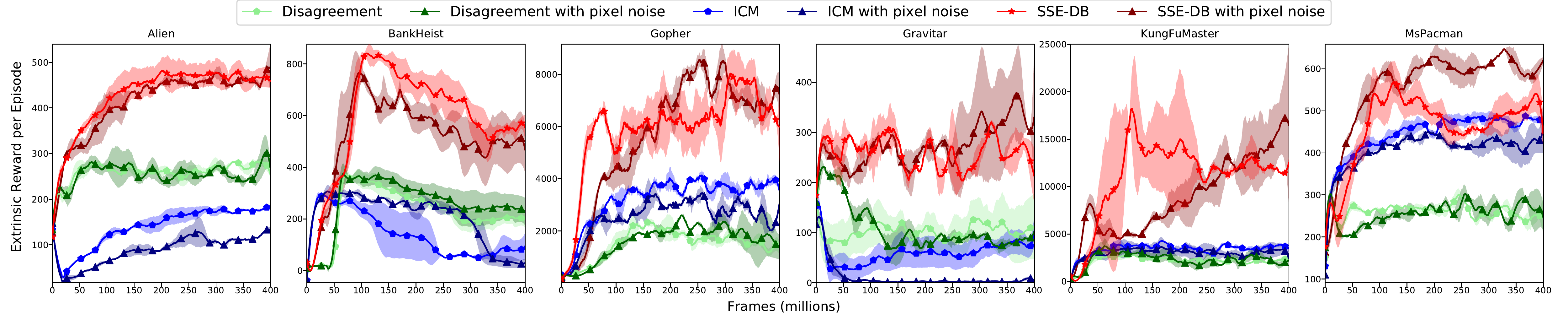}
\caption{A comparison of results on selected Atari games with and without \emph{pixel noise}. SSE-DB shows robustness to pixel noise since we discard the dynamics-irrelevant information. Nevertheless, we find that the adverse effect of introducing pixel noise is limited for exploration, both in SSE-DB and other baselines. Especially, in \emph{Gravitar} and \emph{MsPacman}, the performance of SSE-DB has sight improvement compared to that of standard Atari games. A possible explanation is that introducing the pixel noise leads to data augmentation for image inputs, thus bringing regularization for learning and enhance the generalization ability of the policy.}
\label{fig:compare-atari-pixel}
\end{figure}

\clearpage
\subsection{Sticky Actions}\label{app:experiment-sticky}

Evaluation curves in Atari games with \emph{sticky actions} is give in Fig.~\ref{fig:result-atari-sticky}.

~~~~\\~~\\~~\\~~\\

\begin{figure}[h!]
\centering
\includegraphics[width=5.5in]{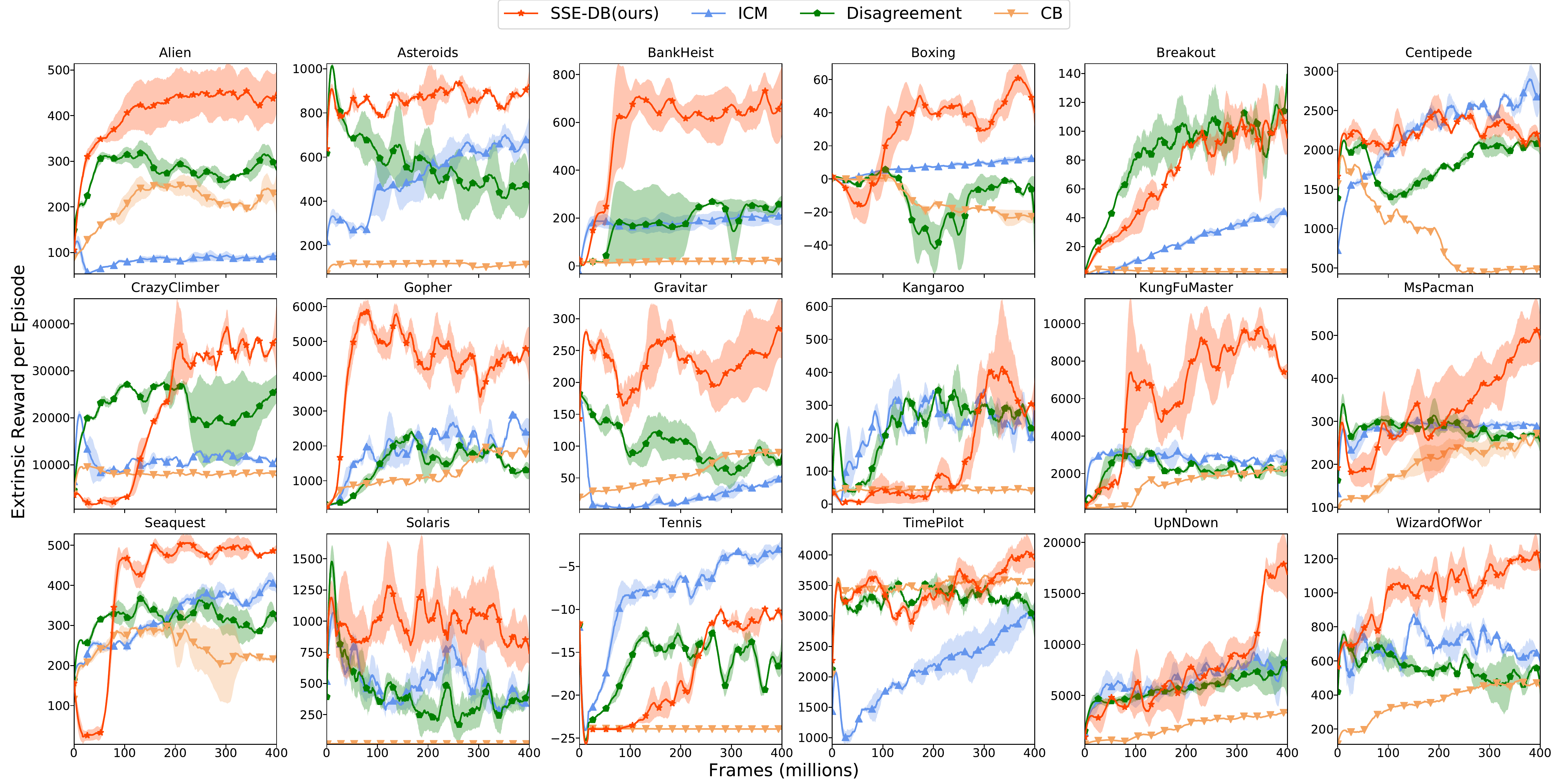}
\caption{Evaluation curves in Atari games with \emph{sticky actions}, which introduces additional stochasticity to the transition dynamics. As shown in figures, SSE-DB outperforms all baselines in 15 out of 18 games. We observe that in Breakout, MsPacman, and UpNDown, SSE-DB performs suboptimal in the early stage of training stage. Nevertheless, the performance gradually improves as the training evolves and reaches to the optimum eventually, which occurs since DB removes the noisy information for the state representation along the training process. We highlight that SSE-DB compresses the latent representation of the corresponding state-action pair, thus minimizing the effect of noisy actions. In addition, Disagreement method based on Bayesian uncertainty also demonstrates robustness to sticky actions, despite the fact that it needs more parameters than other methods according to Table \ref{tab:hyper-complexity}.}
\label{fig:result-atari-sticky}
\end{figure}

\clearpage
\subsection{Ablation Study}\label{app:experiment-abla}

In this section, we present several ablation studies to analyze the components of the DB model. We conduct experiments on different settings that remove different components in DB, namely,
\begin{itemize}[leftmargin=13pt]
\item \emph{No-Upper}, which is a variant of DB that removes $I_{\rm upper}$ from $\mathcal{L}_{\rm DB}$ (set $\alpha_1=0$) in \eqref{eq:db-loss};
\item \emph{No-Pred}, which is a variant of DB that removes $I_{\rm pred}$ from $\mathcal{L}_{\rm DB}$ (set $\alpha_2=0$);
\item \emph{No-NCE}, which is a variant of DB that removes $I_{\rm nce}$ from $\mathcal{L}_{\rm DB}$ (set $\alpha_3=0$); and
\item \emph{No-NCE-Momentum}, which is a variant of DB that removes $I_{\rm nce}$ and utilizes the same encoder $f^S(;\theta)$ for successive observations $o_t$ and $o_{t+1}$, in contrast with the momentum observation encoding for $o_{t+1}$ of DB.
\end{itemize}
We conduct experiments on the Alien task with standard observation and random-Box noise. We illustrate the results in Fig.~\ref{fig:ablation} and Fig.~\ref{fig:ablation-box}, respectively. We illustrate the performance comparison on extrinsic rewards in (a) of Fig.~\ref{fig:ablation}, and the change of $I_{\rm upper}$, $I_{\rm pred}$ $I_{\rm nce}$ in (b), (c) and (d), respectively, of Fig.~\ref{fig:ablation}. We discuss the results in the sequel.

\begin{itemize}[leftmargin=13pt]
\item \emph{No-Upper} setting exhibits similar performance as SSE-DB in standard Atari. Without compressing the representation through minimizing $I_{\rm upper}$, the latent $Z$ preserves more information and exhibits well exploration with DB-bonus. However, in random-box setting, the latent $Z$ contains distractors features thus bringing adverse effects in exploration. Interestingly, $I_{\rm upper}$ first increases and then decrease without minimizing the $I_{\rm upper}$ explicitly (see Fig.~\ref{fig:ablation} (b) and Fig.~\ref{fig:ablation-box} (b)). This phenomenon is reminiscent of previous studies of Information Bottleneck in Deep Learning \citep{ibn-2015,ibn-2017}, suggesting that the neural network has the ability to actively compress the input for efficient representation. Nevertheless, according to our experiments, such compression is still not sufficient to handle the random-box noise in exploration. In such environments, the objective $I_{\rm upper}$ needs to be minimized to discard dynamics-irrelevant features.
\item \emph{No-Pred} setting has significantly reduced performance and stability for both the standard observations and the observations with random-box noise, comparing with SSE-DB. According to our results, the contrastive estimation cannot replace the role of predictive objective in maximizing the mutual information of $Z_t$ and $S_{t+1}$. A possible explanation is that the contrastive loss is applied in a transformed space through projection heads, thus part of the dynamics-relevant information is encoded in the projection encoding but not the latent representation $Z$. However, our DB-bonus is defined in $Z$, thus do not capture the information encoded in projection heads. Besides, according to Fig.~\ref{fig:ablation} (c) and Fig.~\ref{fig:ablation-box} (c), the predictive objective does not increase through solely maximizing $I_{\rm nce}$. 
\item \emph{No-NCE} setting exhibits similar performance as DB in standard Atari. Since we use the momentum observation encoder to prevent collapsing solution, the lack of contrastive estimation does not bring significant performance loss. Meanwhile, according to Fig.~\ref{fig:ablation} (d) and and Fig.~\ref{fig:ablation-box} (d), we find that $I_{\rm nce}$ improves slightly without being optimized explicitly, since the predictive objective helps improving $I_{\rm nce}$ implicitly. Nevertheless, the lack of NCE-loss produces reduced performance with random-box noise. Our experiments show that using contrastive estimation leads to a stronger distillation for the dynamics-relevant feature of observations.
\item \emph{No-NCE-No-Momentum} setting leads to the collapsing solution in representation. According to Fig.~\ref{fig:ablation} (c) and and Fig.~\ref{fig:ablation-box} (c), the predictive objective converges very fast by encoding all observations to uninformative values. Such a trivial solution is undesirable as it does not capture any meaningful information. The performance of such setting decrease significantly comparing against SSE-DB.
\end{itemize}

\begin{figure}[h!]
\centering
\includegraphics[width=5.4in]{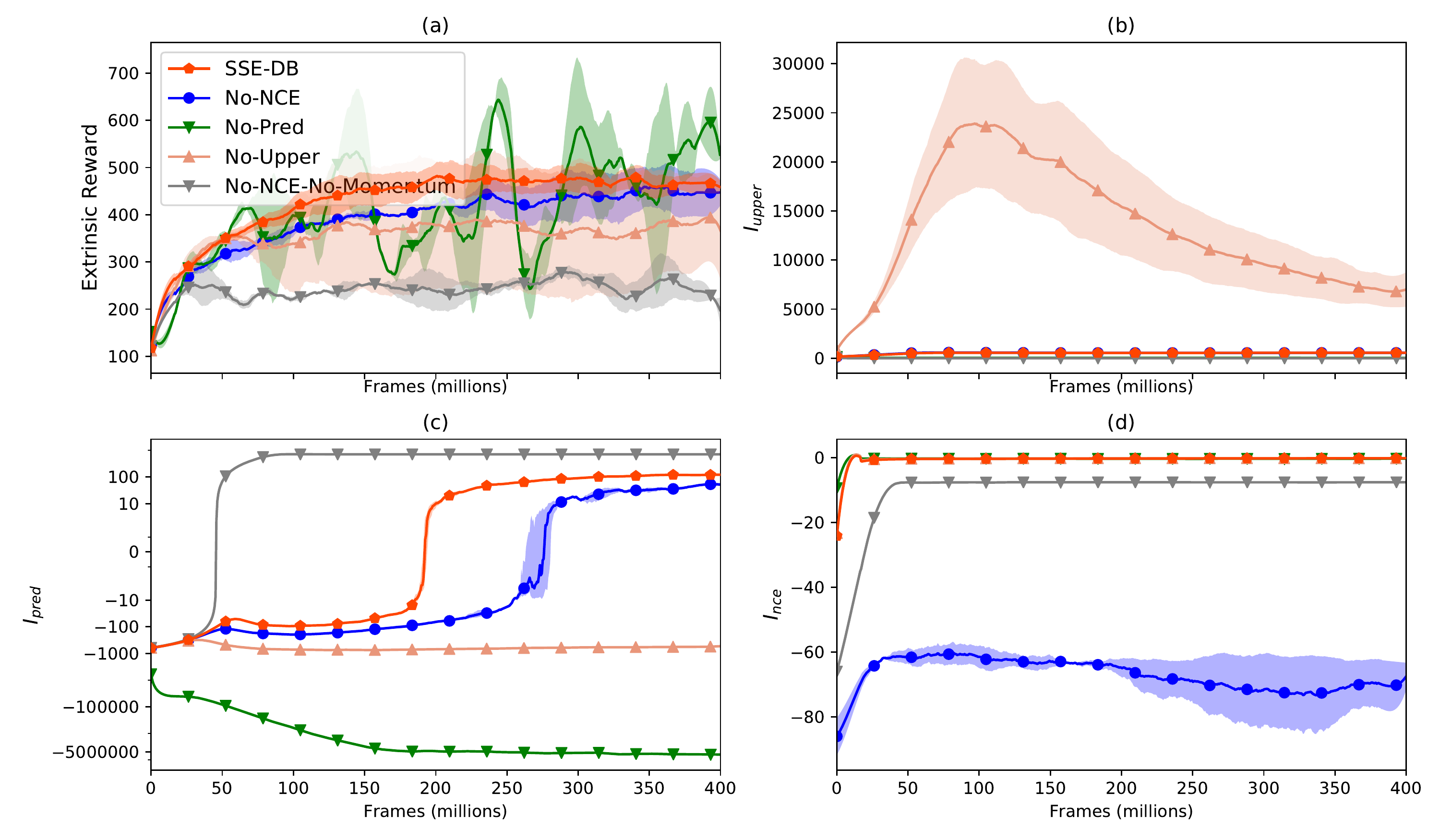}
\caption{Ablation study in \emph{Alien}. We measure (a) extrinsic rewards, (b) $I_{\rm upper}$ that indicates the amount of information contained in the representation space, (c) the predictive objective $I_{\rm pred}$, and (d) the contrastive estimation $I_{\rm nce}$ for comparison.}
\label{fig:ablation}
\end{figure}

\begin{figure}[h!]
\centering
\includegraphics[width=5.4in]{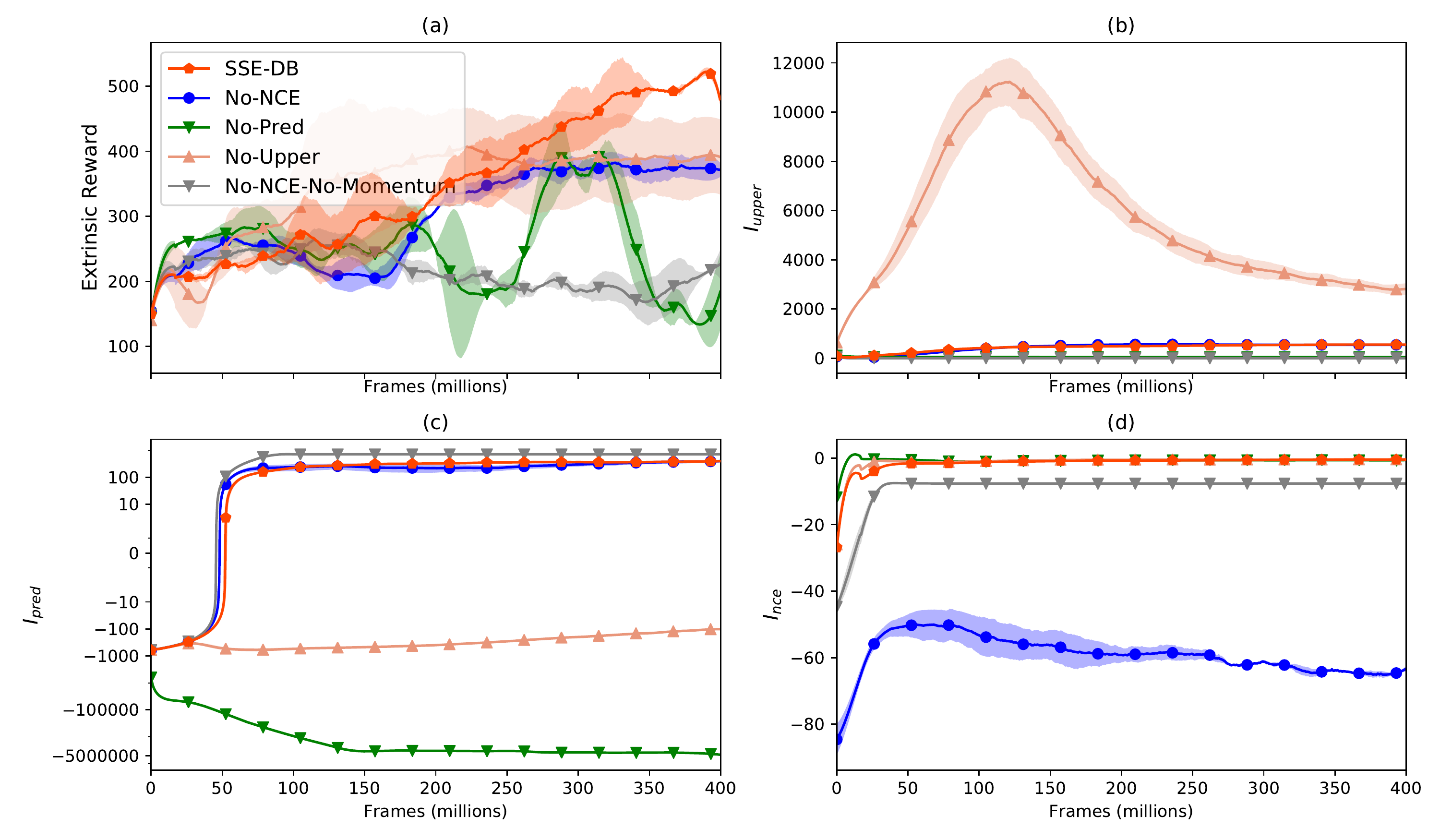}
\caption{Ablation study in \emph{Alien} with \emph{random-box noise}. We measure (a) extrinsic rewards, (b) $I_{\rm upper}$ that indicates the amount of information contained in the representation space, (c) the predictive objective $I_{\rm pred}$, and (d) the contrastive estimation $I_{\rm nce}$ for comparison.}
\label{fig:ablation-box}
\end{figure}

\clearpage
\subsection{Visualizing DB-bonus}\label{app:vis-bonus}

The proposed DB-bonus motivates the agent to explore states and actions that have high information gain to the representation. To further understand the DB-bonus, we provide visualization in two tasks to illustrate the effect of DB-bonuses. We choose two Atari games Breakout and Gopher, and visualize the DB-bonus in an episode based on a trained DB model.

\subsubsection{Breakout}

In Breakout, the agent uses walls and the paddle to rebound the ball against the bricks and eliminate them. We use a trained SSE-DB agent to interact with the environment for an episode in Breakout. The whole episode contains 1942 steps, and we subsample them every 4 steps for visualization. The curve in Fig.~\ref{fig:rewards-breakout} shows the UCB-bonus in 481 sampled steps. 

We select $16$ spikes of the DB-bonus on the trajectory and then visualize their corresponding observations. From the results, we find that the spikes typically correspond to some critical observations, including eliminating bricks (e.g., frames $1$, $3$, and $5$-$8$), rebounding the ball (e.g., frames $2$ and $4$), digging a tunnel (e.g., frames $9$-$12$), and throwing the ball onto the top of bricks (e.g., frames $13$-$16$). These examples demonstrate that the DB-bonus indeed encourages the agent to explore many crucial states, even without knowing the extrinsic rewards. The DB-bonus encourages the agent to explore the potentially informative and novel state-action pairs to get high rewards. We also record 15 frames after each spike for further visualization. The video is available at \url{https://www.dropbox.com/sh/gw7m38o29dfl9zx/AACF1AogB93spuD_Vsk_lsOBa?dl=0}.

\begin{figure}[h!]
\vspace{1em}
\centering
\includegraphics[width=5.4in]{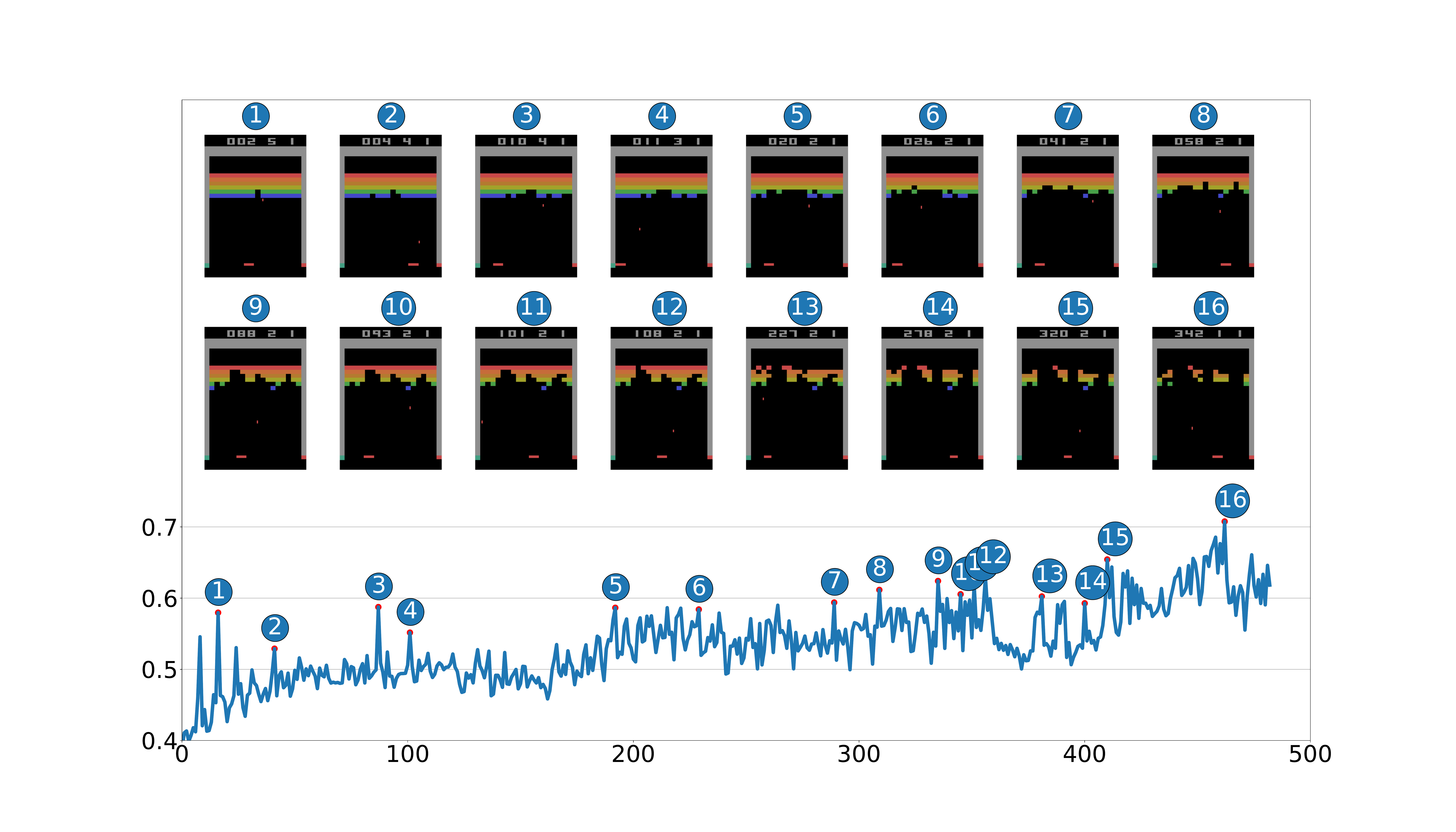}
\caption{Visualization of DB-bonus for an episode in Breakout. The curve corresponds to the DB-rewards of a trajectory. The numbers from $1$ to $16$ corresponds to the selected spikes, and the images are the corresponding observations.}
\label{fig:rewards-breakout}
\end{figure}

\subsubsection{Gopher}

Gopher is a popular Atari game. In this game, the gopher tunnels left, right, and up to the surface. When the gopher makes a hole to the surface, he will attempt to steal a carrot. If the holes have been tunneled to the surface, the farmer (i.e., the agent) can hit the gopher to send him back underground or fill in the holes to prevent him from reaching the surface. Rewards are given when the agent hits the holes and gopher. SSE-DB performs well in this task. To illustrate how DB-bonus works, we use an SSE-DB agent to play this game for an episode and records the DB-bonus in all 4501 steps. Fig.~\ref{fig:rewards-gopher} shows the DB-bonus and the corresponding frames in 16 chosen spikes in 1125 subsampled steps.

We find almost all spikes (i.e., frames 1-3, 5-13, 15-16) of DB-bonus correspond to scenarios that the gopher makes a hole to the surface, which is rarely occurs and signifies that the gopher will have a chance to eat carrots. Also, these scenarios are crucial for the farmer to get rewards since the farmer can hit the gopher and holes to prevent the carrots from being eaten. The DB-bonus encourages the farmer to learn skills to move fast and hit the holes in the surface to obtain high scores. In addition, the gopher moves underground in frames 4 and 14, and the farmer mends holes in the surface. The transitions with high DB-bonuses make the agent explore the environment efficiently. The video of spikes is available at \url{https://www.dropbox.com/sh/gw7m38o29dfl9zx/AACF1AogB93spuD_Vsk_lsOBa?dl=0}.

\begin{figure}[h!]
\centering
\includegraphics[width=5.4in]{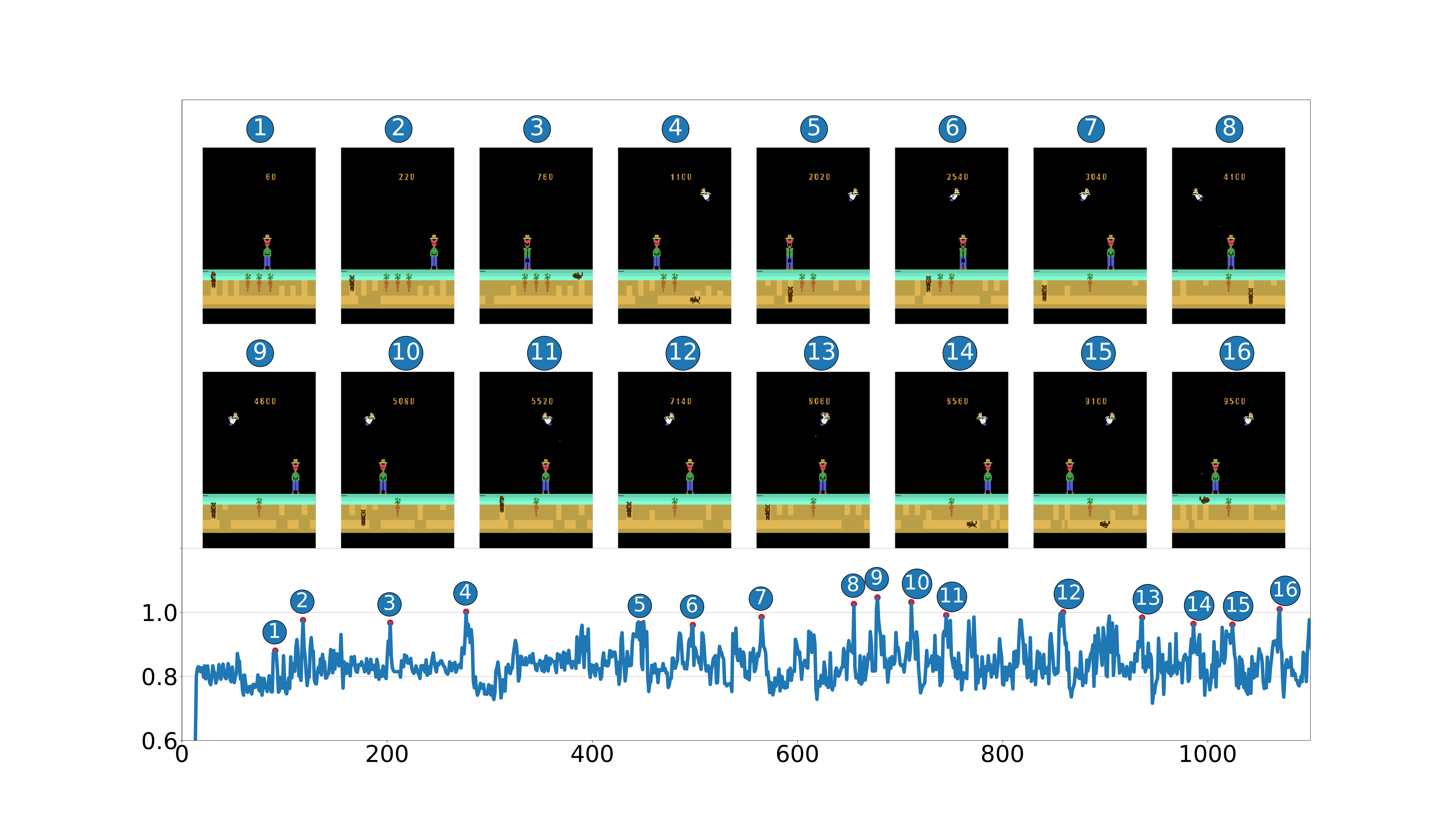}
\caption{Visualization of DB-bonus for an episode in \emph{Gopher}.}
\label{fig:rewards-gopher}
\end{figure}

\clearpage
\subsection{Montezuma's Revenge}

Several exploration methods demonstrate superior performance on Montezuma's Revenge \citep{bonus-2020}. However, these methods all use intrinsic rewards along with extrinsic rewards from the environment in training. In the self-supervised exploration setting where the training hinges solely on the intrinsic rewards, we find that SSE-DB and all the other baselines scores zero in this task.

Nevertheless, we observe that the agent trained with DB-bonus can pass half of the first room in Montezuma's Revenge. We conduct t-SNE \citep{tsne-2008} visualization to illustrate the learned representation of DB. Form Fig.~\ref{fig:montezuma-a}, we find that the latent representations are well aligned in several clusters, which corresponds to stepping down the ladder, jumping to the pillar, and escaping an enemy, respectively. We also visualize the raw states from the same episode with t-SNE in Fig.~\ref{fig:montezuma-b}. In contrast, we do not find any meaningful clusters form the visualization of raw states. DB enables to capture certain aspects of such a hard task. 

We give a video of the trained policy of DB \footnote{\url{https://www.dropbox.com/s/boijqmt66mgnj17/montezuma-DB.mp4?dl=0}}. From the video, we find that the self-supervised agent with DB-bonus can learn some skills, including stepping down the ladder, jumping to the pillar, and trying to escape the enemy. Nevertheless, as the learning curve suggests, such learned skills along are insufficient for obtaining scores.

\vspace{3em}

\begin{figure}[h!]
  \centering
  \subfigure[T-SNE visualization of the latent representation and the corresponding frames]{\includegraphics[width=3.5in]{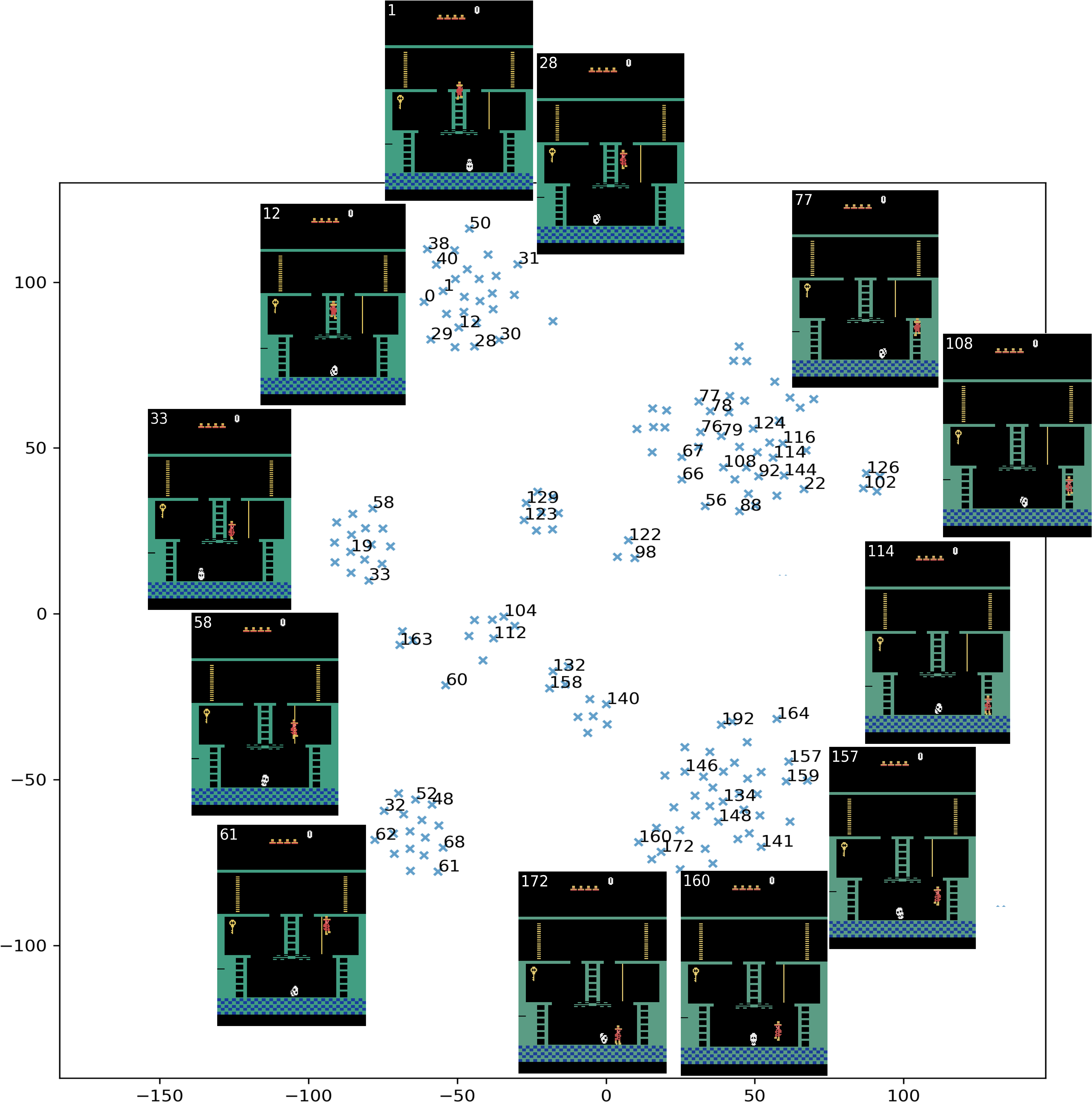}\label{fig:montezuma-a}}
  \hspace{1.0em}
  \subfigure[T-SNE visualization of raw states]{\includegraphics[width=1.5in,totalheight=2.5in]{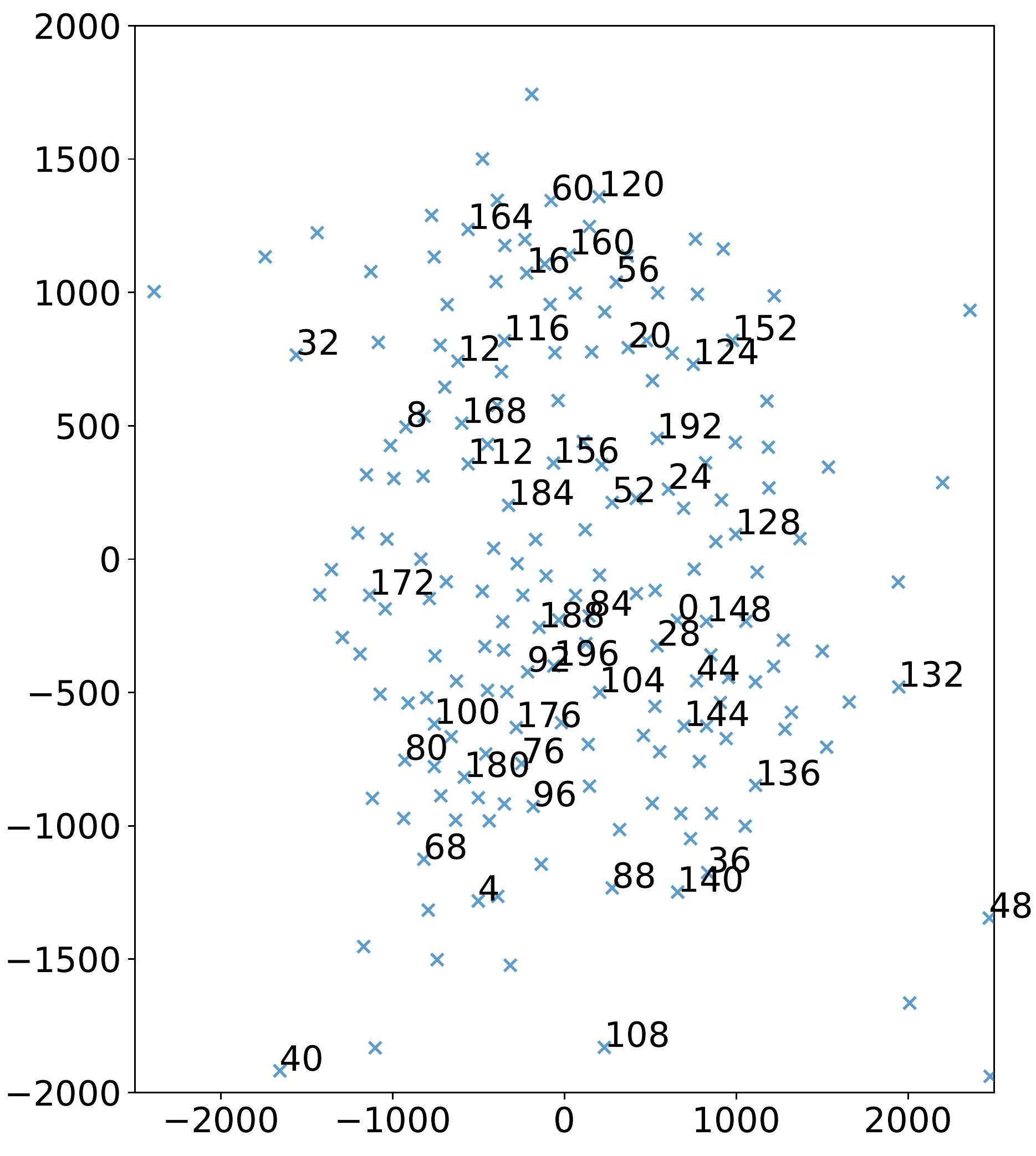}\label{fig:montezuma-b}}
  \caption{(a) Visualization for 200 observations of Montezuma's Revenge in an episode. We visualize the latent representations of DB in 2 dimensions with t-SNE \citep{tsne-2008}. Numbers on top-left of game frames correspond to numbers of representations in lower-dimensional space. (b) Visualization for raw states ($84\times84\times4$ for each one) in the same episode. }
  \label{fig:montezuma}
\end{figure}

\clearpage
\subsection{Comparison with Entropy-Based Exploration}

There exist several methods that perform entropy-based exploration for unsupervised representation learning, and then use this representation for downstream task adaptation, including VISR \citep{hansen2019fast}, APT \citep{liu2021behavior}, APS \citep{liu2021aps}, RE3 \citep{RE3-2021} and Proto \citep{Proto-2021}. 
The entropy-based methods use $k$-nearest neighbor state entropy estimator to estimate the entropy of state $\mathcal{H}(s)$ and then use it as intrinsic rewards.

In this section, we focus on the unsupervised exploration stage and compare SSE-DB with entropy-based exploration methods. Since APT and APS do not release code and ProtoRL conducts experiments in DeepMind control rather than Atari, we conduct experiments with RE3 algorithm. Since RE3 uses Rainbow as the basic algorithm, and uses both the extrinsic and intrinsic rewards in training, we re-implement the RE3 bonus in our codebase to evaluate its performance in a self-supervised setting with noisy environments. 
As shown in Fig.~\ref{fig:RE3-randombox}, RE3 performs reasonably in standard Atari games. However, the performance decreased significantly in the Random-Box Atari environments. A possible reason is that the entropy of the state increases significantly if we inject noises. Hence, exploration is misled by the noises in these environments. Nevertheless, as shown in Fig.~\ref{fig:RE3-sticky}, the entropy-based methods are robust to sticky actions since they use the entropy of states in exploration, without considering the entropy of actions.

\begin{figure}[h!]
\centering
\includegraphics[width=5.4in]{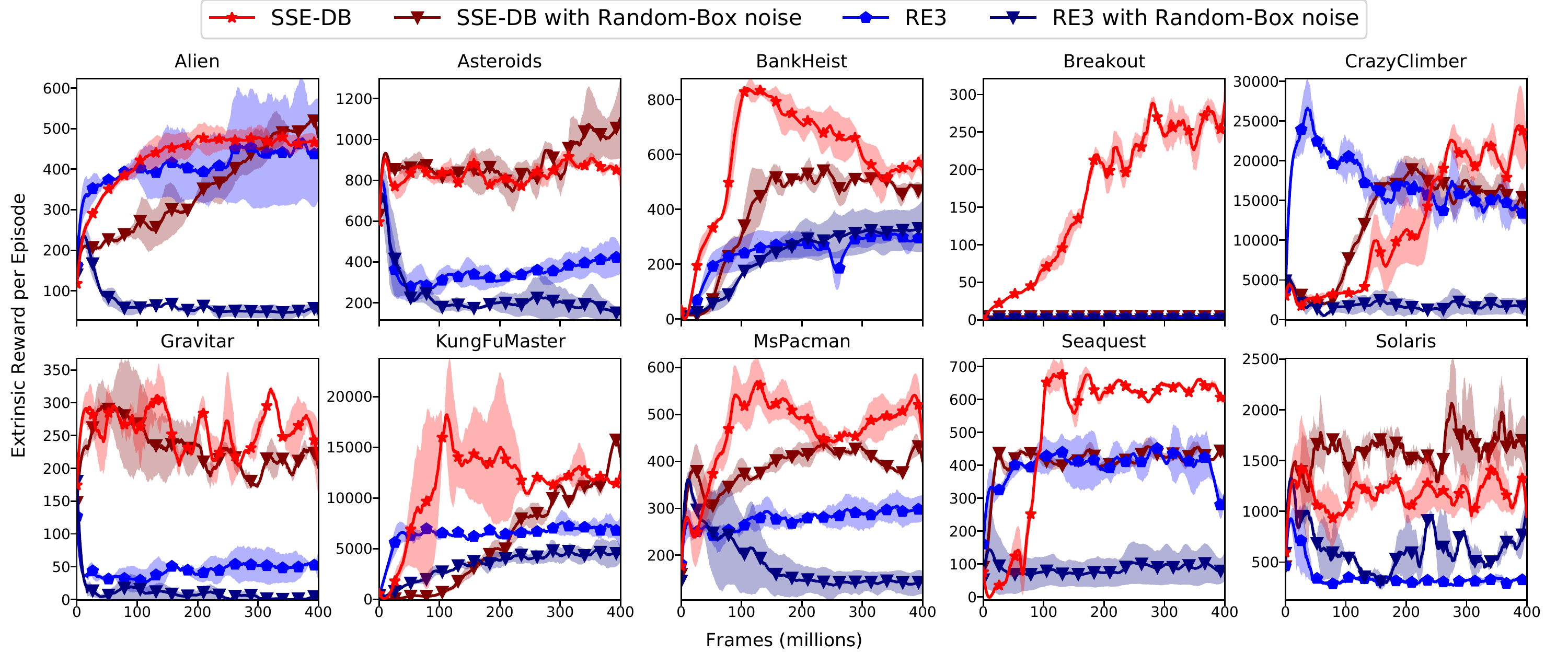}
\caption{A comparison of results on selected Atari games with and without Random-Box noise. SSE-DB shows robustness to Random-Box noise while RE3 is sensitive to the noises.}
\label{fig:RE3-randombox}
\end{figure}

\begin{figure}[h!]
\centering
\includegraphics[width=5.4in]{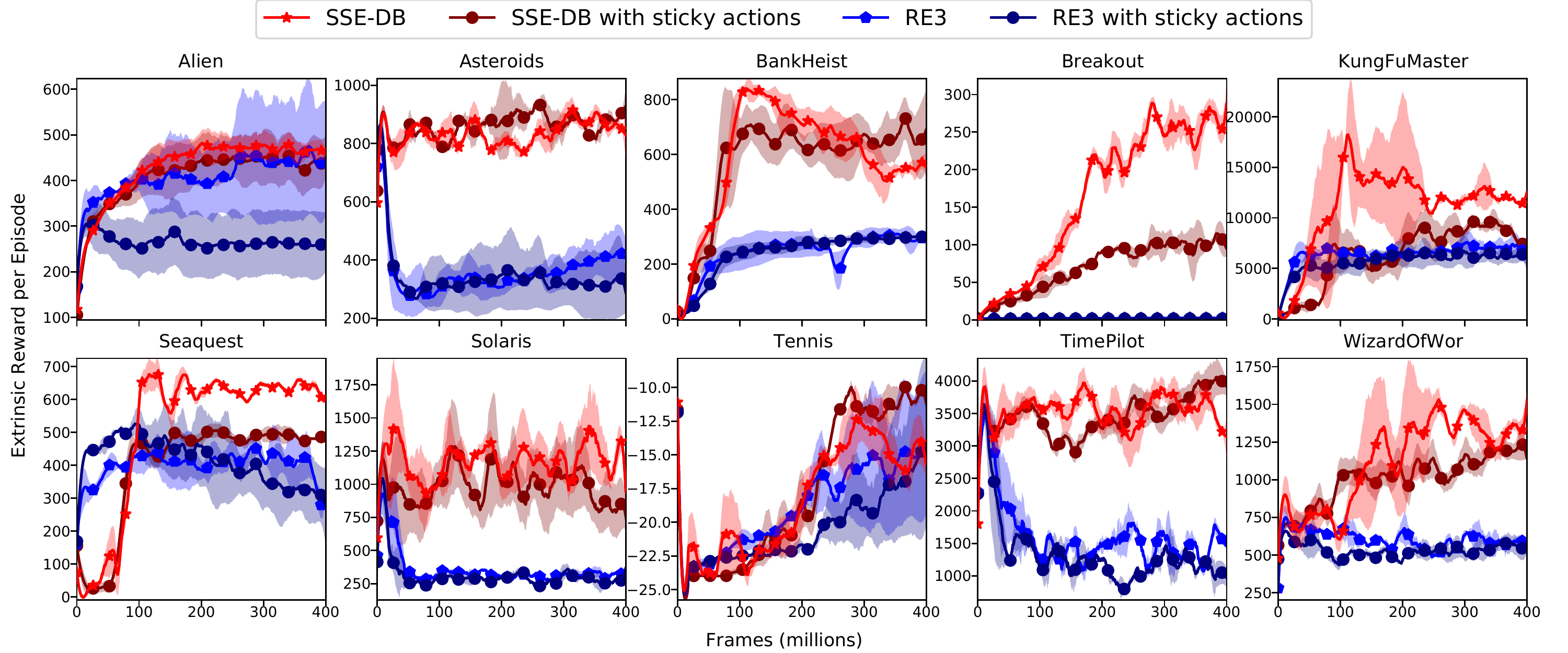}
\caption{A comparison of results on selected Atari games with and without sticky actions. Both SSE-DB and RE3 show robustness to sticky actions.}
\label{fig:RE3-sticky}
\end{figure}

\end{document}